\newcommand\vldbdoi{XX.XX/XXX.XX}
\newcommand\vldbpages{XXX-XXX}
\newcommand\vldbvolume{14}
\newcommand\vldbissue{1}
\newcommand\vldbyear{2020}
\newcommand\vldbauthors{\authors}
\newcommand\vldbtitle{\shorttitle} 
\newcommand\vldbavailabilityurl{http://vldb.org/pvldb/format_vol14.html}
\newcommand\vldbpagestyle{plain} 
\newtheorem{theorem}{Theorem}
\newtheorem{example}{Example}
\newtheorem{definition}{Definition}
\newtheorem{proposition}{Proposition}
\newtheorem{lemma}{Lemma}
\newcommand{\figwidthone}{1,0} 
\newcommand{\mc}[1]{\textcolor{blue}{#1}} 
\newcommand{\nop}[1]{}
\begin{document}
\begin{sloppy}
\title{Comprehensible Counterfactual Explanation on Kolmogorov-Smirnov Test}

\author{Zicun Cong}
\affiliation{%
  \institution{Simon Fraser University}
}
\email{zcong@sfu.ca}

\author{Lingyang Chu}
\affiliation{%
  \institution{McMaster University}
}
\email{chul9@mcmaster.ca}

\author{Yu Yang}
\affiliation{%
  \institution{City University of Hong Kong}
}
\email{yuyang@cityu.edu.hk}

\author{Jian Pei}
\affiliation{%
  \institution{Simon Fraser University}
}
\email{jpei@cs.sfu.ca}

\begin{abstract}

The Kolmogorov-Smirnov (KS) test is popularly used in many applications, such as anomaly detection, astronomy, database security and AI systems.  One challenge remained untouched is how we can obtain an explanation on why a test set fails the KS test. In this paper, we tackle the problem of producing counterfactual explanations for test data failing the KS test. Concept-wise, we propose the notion of most comprehensible counterfactual explanations, which accommodates both the KS test data and the user domain knowledge in producing explanations.  Computation-wise, we develop an efficient algorithm MOCHI (for \underline{MO}st \underline{C}ompre\underline{H}ensible \underline{I}nterpretation) that avoids enumerating and checking an exponential number of subsets of the test set failing the KS test.  MOCHI not only guarantees to produce the most comprehensible counterfactual explanations, but also is orders of magnitudes faster than the baselines. Experiment-wise, we present a systematic empirical study on a series of benchmark real datasets to verify the effectiveness, efficiency and scalability of most comprehensible counterfactual explanations and MOCHI.

\end{abstract}

\maketitle

\pagestyle{\vldbpagestyle}
\begingroup\small\noindent\raggedright\textbf{PVLDB Reference Format:}\\
\vldbauthors. \vldbtitle. PVLDB, \vldbvolume(\vldbissue): \vldbpages, \vldbyear.\\
\href{https://doi.org/\vldbdoi}{doi:\vldbdoi}
\endgroup
\begingroup
\renewcommand\thefootnote{}\footnote{\noindent
This work is licensed under the Creative Commons BY-NC-ND 4.0 International License. Visit \url{https://creativecommons.org/licenses/by-nc-nd/4.0/} to view a copy of this license. For any use beyond those covered by this license, obtain permission by emailing \href{mailto:info@vldb.org}{info@vldb.org}. Copyright is held by the owner/author(s). Publication rights licensed to the VLDB Endowment. \\
\raggedright Proceedings of the VLDB Endowment, Vol. \vldbvolume, No. \vldbissue\ %
ISSN 2150-8097. \\
\href{https://doi.org/\vldbdoi}{doi:\vldbdoi} \\
}\addtocounter{footnote}{-1}\endgroup

\ifdefempty{\vldbavailabilityurl}{}{
\vspace{.3cm}
\begingroup\small\noindent\raggedright\textbf{PVLDB Artifact Availability:}\\
The source code, data, and/or other artifacts have been made available at \url{\vldbavailabilityurl}.
\endgroup
}

\section{Introduction}
\label{sec:intro}

The well-known Kolmogorov-Smirnov (KS) test~\cite{klotz1967asymptotic} is a statistical hypothesis test that checks whether a test set is sampled from the same probability distribution as a reference set. If a reference set and a test set fail the KS test, it indicates that the two sets are unlikely from the same probability distribution. \mc{The KS test has been widely used to detect differences, changes and abnormalies in many areas, such as astronomy~\cite{naess2012application}, database security~\cite{santos2014approaches} and AI systems~\cite{rabanser2019failing}. Many important decisions are made based on the raised alarms about changes and abnormality, such as updating an AI model~\cite{yu2018request} and overhauling a manufacture line~\cite{kifer2004detecting}. Understanding  why a test set fails a KS test can build trust from users~\cite{ribeiro2016should} and thus improve the decision quality. In some situations, the understanding may help save data labeling and model construction costs~\cite{yu2018request}.}
However, a failed KS test itself does not come with an explanation on which data points in the test set cause the failure.

\mc{Counterfactual explanations~\cite{moraffah2020causal} have been widely adopted to interpret algorithmic decisions in many real world applications~\cite{brundage2020toward, fong2017interpretable, mothilal2020explaining, wachter2017counterfactual}, due to its beauty of being concise and easy to understand~\cite{sokol2019counterfactual, moraffah2020causal}.
A counterfactual explanation of a decision $Y$ is the smallest set of relevant factors $X$ such that changing $X$ can alter the decision $Y$~\cite{wachter2017counterfactual, moraffah2020causal}.}
For a \emph{failed KS test}, where a reference set $R$ and a test set $T$ fail the KS test, a counterfactual explanation is a minimum subset $S$ of the test set $T$ such that removing the subset from $T$ reverses the failed KS test into a passed one, that is, $R$ and $T\setminus S$ pass the KS test.
\mc{Using counterfactual explanations to interpret failed KS tests helps users gain more insights into changes and differences behind the failed KS tests.}

\begin{example}[Motivation]\rm
\label{ex:motivation}
\mc{A public health officer may want to compare the distributions of COVID-19 cases reported in August and September 2020 in the province of British Columbia, Canada.  She may use the cases in August as the reference set and those in September as the test set, where each COVID-19 reported case is associated with the age group of the patient. The cases are divided into $10$ age groups, (0-10), (10-19), $\ldots$, (80-89), and (90+).
A failed KS test between the two sets suggests that the infected cases in those two months unlikely follow the same distribution on age groups.  This information may be helpful to review the infection control policies.  As a KS test may raise false alarms~\cite{polyzotis2019data}, the officer may want to have a counterfactual explanation on this failed KS test, which reveals the cases that are likely relevant to the change. 
Example~\ref{ex:lexicographic_order} and our case study in Section~\ref{subsec:exp_effective} illustrate how such counterfactual explanations can help us to understand the changes.}
\end{example}

Although interpreting failed KS tests is interesting and has many potential applications, it has not been touched in literature. Although there are many counterfactual explanation methods interpreting the decisions of machine learning models~\cite{fong2017interpretable, akula2020cocox, grace}, unfortunately, the existing methods cannot be adopted to interpret failed KS tests. As reviewed in Section~\ref{sec:related_work}, to interpret a failed KS test, the existing methods have to solve an $L_{0}$-norm optimization problem, which is NP-hard~\cite{modas2019sparsefool}. Some methods~\cite{rabanser2019failing, pinto2019automatic} try to select the outliers in the test set as a hint to a failed KS test. However, because outlier detection methods and KS tests use different mechanisms to detect anomalies, there is no guarantee that the outliers are relevant to the failure of a KS test. 
Moreover, due to the Roshomon effect~\cite{moraffah2020causal}, multiple counterfactuals may co-exist for a failed KS test but not all of them are comprehensible to users~\cite{moraffah2020causal}. Simply presenting all counterfactuals not only may overwhelm users but is also computationally expensive~\cite{molnar2019}. 

A desirable idea is to find a counterfactual explanation that is most consistent with a user's domain knowledge so that the explanation is best comprehensible to the user~\cite{carvalho2019machine, mothilal2020explaining}. However, none of the existing methods can find such most comprehensible explanations.
Moreover, finding the most comprehensible explanation is far from trivial. A brute force method has to enumerate all subsets of a test set and, for each subset, conduct a KS test.  Thus, the brute force method takes exponential time. 

In this paper, we tackle the novel problem of producing counterfactual explanations for failed KS tests. 
We make several contributions.  Concept-wise, we propose the notion of comprehensible counterfactual explanations. Given a failed KS test, we find a smallest subset of the test set such that removing the subset from the test set reverses the failed KS test into a passed one. To address user comprehensibility~\cite{moraffah2020causal, carvalho2019machine}, we take a user's domain knowledge represented as a preference order on the data points in the test set, and guarantee to find the counterfactual explanation that is most consistent with the preference.
Computation-wise, we develop MOCHE (for \underline{MO}st \underline{C}ompre\underline{H}ensible \underline{E}xplanation), a two-step fast method that guarantees to find the most comprehensible counterfactual explanation on a failed KS test. Specifically, MOCHE first identifies the number of data points in the explanation and then efficiently constructs the most comprehensible explanation. We establish an important insight that the size of removed data points is the smallest integer satisfying a group of inequalities. Leveraging this property, an efficient searching algorithm is designed to find the explanation size. Then, MOCHE efficiently constructs the most comprehensible explanation by one scan of the data points in the test set. Experiment-wise, we conduct a systematic empirical study on a series of benchmark real datasets to verify the effectiveness, efficiency and scalability of most comprehensible counterfactual explanations and MOCHE.

\vspace{-1mm}

\section{Related Work}
\label{sec:related_work}

To the best of our knowledge, interpreting a failed KS test is a novel task that has not been systematically investigated in literature. Our study is broadly related to the Kolmogorov-Smirnov test~\cite{kifer2004detecting,schelter2020learning, keller2012hics}, counterfactual explanations~\cite{akula2020cocox, fong2017interpretable, grace}, adversarial attacks~\cite{papernot2017practical, brendel2017decision, croce2019sparse}  and outlier detection~\cite{ramaswamy2000efficient, gu2019statistical, ding2013anomaly}.

The Kolmogorov-Smirnov (KS) test~\cite{klotz1967asymptotic}  is a well-known statistical hypothesis test that checks whether two samples are originated from the same probability distribution. With the advantages of being efficient, non-parametric, and distribution-free~\cite{lall2015data}, the KS test has been widely used in many applications to detect differences, changes and abnormalities~\cite{ding2013anomaly}, such as identifying change points in time series~\cite{kifer2004detecting, ding2013anomaly}, maintaining machine learning models~\cite{schelter2020learning,rabanser2019failing, dos2016fast}, ensuring quality of encrypted or anonymized data~\cite{agrawal2004order, hay2008resisting}, and protecting databases from intrusion attacks~\cite{santos2014approaches}.

As illustrated in Section~\ref{sec:intro} and further elaborated later, understanding why a KS test is failed may be important in real world applications~\cite{pinto2019automatic, rabanser2019failing}. However, a failed KS test itself does not provide any hints on which data points in the test set may be related to the failure. Therefore, finding explanations of failed KS tests is a natural next step.


Counterfactual explanations~\cite{wachter2017counterfactual, sokol2019counterfactual, moraffah2020causal} have been widely adopted to interpret algorithmic decisions made in many real world applications~\cite{akula2020cocox, fong2017interpretable, grace}. 
Those methods~\cite{fong2017interpretable, akula2020cocox, van2019interpretable, moraffah2020causal} interpret a prediction on a given instance by applying small and interpretable perturbations on the instance such that the prediction is changed~\cite{moraffah2020causal}. For example, Fong~\textit{et~al.}~\cite{fong2017interpretable} interpret the prediction  of an image by finding the smallest pixel-deletion mask that leads to the most significant drop of the prediction score. As an extension, 
Akula~\textit{et~al.}~\cite{akula2020cocox} identify meaningful image patches that need to be added to or deleted from an input image. Van Looveren~\textit{et~al.}~\cite{van2019interpretable} use class prototypes to generate counterfactuals that lie close to the classifier's training data distribution.
Le~\textit{et~al.}~\cite{grace} use an entropy-based feature selection approach to limit the features to be perturbed.

Unfortunately, the existing counterfactual explanation methods cannot effectively and efficiently interpret a failed KS test by perturbing the data points in the test set. This is because, to minimize the number of perturbed data points, the existing methods need to minimize the $L_{0}$-norm of their perturbations~\cite{modas2019sparsefool}. However, such an optimization problem is NP-hard~\cite{nikolova2013description, modas2019sparsefool}. The existing methods cannot guarantee to reach a global minimum for the optimization problem in an efficient manner.

One may think adversarial attack methods~\cite{cheng2018query, brendel2017decision, croce2019sparse, papernot2017practical, papernot2016transferability} may be extended to find counterfactual explanations on failed KS tests. To attack a target classifier, an adversarial attack method generates an imperceptible perturbation on an input so that the prediction on the input is changed. Brendel~\textit{et~al.}~\cite{brendel2017decision} propose to generate adversarial perturbations by moving instances towards the estimated decision boundaries of a target model. Cheng~\textit{et~al.}~\cite{cheng2018query} formulate the black-box attack as an optimization problem, which can be solved by the zeroth order optimization approaches. Croce~\textit{et~al.}~\cite{croce2019sparse} propose to attack image classifiers by applying randomly selected one-pixel modifications on images.  One may generate counterfactual explanations on a failed KS test by attacking the KS test, that is, the perturbed data points can serve as a counterfactual explanation on the KS test. However, extending the existing adversarial attack methods to interpret failed KS tests also needs to minimize the $L_{0}$-norm of the perturbations and leads to the same computational challenge.

Outlier detection methods aim to detect samples that are different from the majority of the given data~\cite{aggarwal2015outlier}, such as distance-based approaches~\cite{ramaswamy2000efficient, gu2019statistical, angiulli2002fast, boniol2020series2graph}, density-based approaches~\cite{breunig2000lof, papadimitriou2003loci, goldstein2012histogram, 10.5555/1182635.1164145} and ensemble-based approaches~\cite{ding2013anomaly, lazarevic2005feature}. In general, outliers are regarded as abnormal data points~\cite{aggarwal2015outlier}.

Even though both the KS test and outlier detection methods can detect anomalies in data, the detected outliers in the test set cannot be used as a counterfactual explanation on a failed KS test. This is because outlier detection methods and the KS test use different mechanisms to detect anomalies. Different from the KS test, the outlier detection methods do not compare the distributions of the reference set and the test set. Therefore, there is no guarantee that outliers can explain a failed KS test.  Just removing the outliers cannot guarantee to reverse a failed KS test to a passed one. 


\section{Problem Formulation and Analysis}
\label{sec:problem_formulation}

In this section, we first review the basics of the Kolmogorov-Smirnov (KS) test.
Then, we investigate how to generate a counterfactual explanation on a KS test. 
Third, we discuss the comprehensibility of explanations, and formalize the problem of finding the most comprehensible explanation on a failed KS test. Next, we investigate the existence and uniqueness of most comprehensible counterfactual explanations.  Last, we describe a brute force method.

\subsection{The Kolmogorov-Smirnov Test}

Denote by $R=\{r_1, \dots, r_n\}$ a multi-set of real numbers from an unknown univariate probability distribution,  and by $T=\{t_1, \dots, t_m\}$ another multi-set of real numbers that are sampled from a distribution that may or may not be the same as $R$. We call $R$ a \emph{reference set} and $T$ a \emph{test set}. In this paper, by default multi-set is used.  In the rest of the paper, we use the terms ``set'' and ``multi-set'' interchangeably unless specifically mentioned.


The Kolmogorov-Smirnov (KS) test checks whether $T$ is sampled from the same probability distribution as  $R$ by comparing the empirical cumulative functions of $R$ and $T$.
In the KS test, the null hypothesis is that  $T$ is sampled from the same probability distribution as $R$. 

Conducting the KS test consists of $3$ steps as follows. 

\emph{Step 1.} We compute the \emph{KS statistic}~\cite{dos2016fast} by 
\begin{equation}
    \label{eq:ks_statistic}
    D(R, T)=
    \max_{x \in R \cup T}|F_{R}(x) - F_{T}(x)|,
\end{equation}
where $F_{R}(x)$ and $F_{T}(x)$ are the empirical cumulative functions of  $R$ and $T$, respectively.
Here, a larger value of $D(R, T)$ indicates that the empirical cumulative functions of  $R$ and $T$ are more different from each other.

\emph{Step 2.} For a user-specified significance level $\alpha$, we compute the corresponding target $p$-value~\cite{dos2016fast} by 
$    p = c_{\alpha} \sqrt{\frac{n+m}{n*m}}$,
where $c_{\alpha}=\sqrt{-\frac{1}{2}\ln(\frac{\alpha}{2})}$ is the critical value at significance level $\alpha$, $n=|R|$ is the number of data points in $R$, and $m=|T|$.

\emph{Step 3.} We compare $p$ and $D(R, T)$. If $D(R, T) > p$, we reject the null hypothesis at significance level $\alpha$. This means the empirical cumulative functions $F_R(x)$ and $F_T(x)$ are significantly different from each other, and thus it is unlikely $T$ is sampled from the same distribution as  $R$.
If $D(R, T) \leq p$, we cannot reject the null hypothesis at significance level $\alpha$. There is not enough evidence showing that $T$ is not sampled from the same distribution as $R$.

If the null hypothesis is rejected by the KS test, we say $R$ and $T$ \emph{fail} the KS test and it is a \emph{failed KS test}. Otherwise, we say $R$ and $T$ \emph{pass} the KS test.

To compute the KS statistic between $R$ and $T$, we need to sort the elements in $R \cup T$ in ascending order. Therefore, it takes $O((n+m)\log(n+m))$ time to conduct the KS test.

\nop{
Table~\ref{tbl:notations} summarizes some frequently used notations in this paper.

\begin{table}[t]
\scalebox{0.90}{
\begin{tabular}{|l|l|}
\hline\small
\textbf{Notation}      & \textbf{Description} \\ \hline
$R$           & The reference set of a KS test.          \\ \hline
$T$           & The test set of a KS test.           \\ \hline
$S$           & An $h$-subset of $T$.                         \\ \hline
$L$           & A preference list on the test set $T$            \\ \hline
$\prec$ & The lexicographical order based on $L$ \\ \hline
$\mathbf{C}$           & An $h$-cumulative vector.            \\ \hline
$\mathcal{I}$ & An explanation.            \\ \hline
$q$           & The number of unique data points in $R \cup T$.   \\ \hline
$n$           & The size of $R$, that is $n=|R|$.            \\ \hline
$m$           & The size of $T$, that is $m=|T|$.              \\ \hline
$k$           & The size of $\mathcal{I}$, that is $k=|\mathcal{I}|$.            \\ \hline
$\mathbf{C}[i]$, $c_i$         & The $i$-th element of an $h$-cumulative vector $\mathbf{C}$. \\ \hline
$l_i^h$       & The lower bound of $\mathbf{C}[i]$.            \\ \hline
$u_i^h$       & The upper bound of $\mathbf{C}[i]$.            \\ \hline
$x_i$         & An element in $R \cup T$.            \\ \hline
$c_{\alpha}$ & The critical value at significance level $\alpha$  \\ \hline
\end{tabular}
}
\caption{Frequently used notations.}
\label{tbl:notations}
\end{table}
}

\subsection{Counterfactual Explanations on the KS Test}\label{sec:failedtest}

Why are we interested in failed KS tests?
More often than not, a failed hypothesis test indicates something unusual or unexpected~\cite{chen2019novel,lall2015data,rabanser2019failing}.
\mc{As many important decisions are made based on failed KS tests~\cite{yu2018request, kifer2004detecting}, 
it is important to interpret a failed KS test so that we can make better responses to the change and abnormality alarms.}

\mc{Counterfactual explanation is an explanation technique proposed by the community of explainable artificial intelligence. It has been well demonstrated to be more human-friendly than other types of explanations~\cite{sokol2019counterfactual, moraffah2020causal}. 
The counterfactual explanation methods interpret a decision $Y$ by finding a smallest set of relevant factors $X$, such that changing $X$ can alter the decision $Y$~\cite{wachter2017counterfactual, moraffah2020causal}. The set of factors $X$ is called a counterfactual explanation on $Y$.}
Following the above principled idea, we have the following definition.

\begin{definition}
\label{def:interpretation}
For a reference set $R$ and a test set $T$ that fail the KS test at a significance level $\alpha$,
a \textbf{counterfactual explanation} on the failed KS test is a smallest subset $\mathcal{I}$ of the test set $T$, such that $R$ and $T\setminus \mathcal{I}$ pass the KS test at the same significance level $\alpha$. 
\end{definition}

A counterfactual explanation is also called an \emph{explanation} for short when the context is clear.

\subsection{The Most Comprehensible Counterfactual Explanation on a KS Test}
\label{sec:most_comprehensible_interpretation}

Like many previously proposed counterfactual explanations~\cite{wachter2017counterfactual, fong2017interpretable, mothilal2020explaining}, the counterfactual explanations on a failed KS test suffer from the \emph{Roshomon effect}~\cite{moraffah2020causal}, that is, the number of unique counterfactual explanations on a failed KS test can be as large as ${|T| \choose |\mathcal{I}|}$. 
Simply presenting all counterfactuals not only may overwhelm users but is also computationally expensive~\cite{molnar2019}. 

As discovered by many studies on counterfactual explanations~\cite{mothilal2020explaining, kusner2017counterfactual, sokol2019counterfactual}, not all counterfactual explanations are equally comprehensible to a user. 
Due to the effect of confirmation bias~\cite{nickerson1998confirmation}, an explanation is more comprehensible if it is more consistent with the user's domain knowledge~\cite{miller2019explanation}. 
As a result, a typical way to overcome the Roshomon effect is to rank all explanations according to the user's preference based on the domain knowledge, and return the most preferred explanation to the user~\cite{mothilal2020explaining, artelt2019efficient}.

Following the above idea, we model a user's preference as a total order on the data points in the test set $T$, that is, a \emph{preference list} $L$ on the test set $T$. Each data point has a unique rank in $L$. The data points having smaller ranks in $L$ are more preferred by the user. 

A typical task of recommendation system is to recommend a group of items to a user, such that the group best satisfies the user's preference~\cite{xie2010breaking}. The existing studies~\cite{tschiatschek2017selecting, chen2015information, benouaret2019efficient, xie2010breaking} discover that a user's interest in a group is dominated by the user's top favorite items in the group. In the same vein, one can think of an explanation as a recommended group of data points. Given two explanations $\mathcal{I}_1$ and $\mathcal{I}_2$ on a failed KS test, if $\mathcal{I}_1$ includes better-ranked data points in $L$ than $\mathcal{I}_2$ does, $\mathcal{I}_1$ is more preferred by the user than $\mathcal{I}_2$, and thus is more comprehensible.

Based on the above intuition, an explanation with a smaller lexicographical order\footnote{Given a total order $\prec_L$ on items, the lexicographical order $\prec_{\text{lexicographical}}$ is $x_1x_2 \cdots x_n \prec_{\text{lexicographical}} y_1 y_2 \cdots y_l$ if (1) $x_1 \prec_L y_1$; (2) there exists $i_0$ $(1 < i_0 \leq \min\{n, l\})$, $x_i = y_i$ for $1 \leq i < i_0$ and $x_{i_0} \prec_L y_{i_0}$; or (3) $m<l$ and for $1 \leq i \leq m$, $x_i=y_i$.  Lexicographical order is also known as dictionary order.
} based on the preference list $L$ is more preferred by the user. Specifically, for two explanations $\mathcal{I}_1$ and $\mathcal{I}_2$, $\mathcal{I}_1 \subseteq T$, $\mathcal{I}_2 \subseteq T$ and $|\mathcal{I}_1|=|\mathcal{I}_2|$, 
we sort the data points in $\mathcal{I}_1$ and $\mathcal{I}_2$ in the order of $L$. Denote by $\mathcal{I}[i]$ the $i$-th data point in $\mathcal{I}$ in the order of $L$. Let $i_0>0$ be the smallest integer such that $\mathcal{I}_1[i_0] \neq \mathcal{I}_2[i_0]$. $\mathcal{I}_1$ precedes $\mathcal{I}_2$ in the lexicographical order, denoted by $\mathcal{I}_1 \prec \mathcal{I}_2$, if $\mathcal{I}_1[i_0]$ precedes $\mathcal{I}_2[i_0]$ in $L$. If $\mathcal{I}_1 \prec \mathcal{I}_2$, $\mathcal{I}_1$ includes more top-ranked data items in $L$ than $\mathcal{I}_2$. 

\begin{definition}
\label{def:most_comprehensible_interpretation}
Given a failed KS test and a preference list $L$, the \textbf{most comprehensible counterfactual explanation} is the explanation that has the smallest lexicographical order based on $L$.
\end{definition}

The notion of comprehensible explanation captures user preferences in domain knowledge. In order to capture different domain knowledge, we can employ different preference lists to sort the data points in the test set. 

\begin{figure}
    \centering
    \includegraphics[page=1, width=1\linewidth]{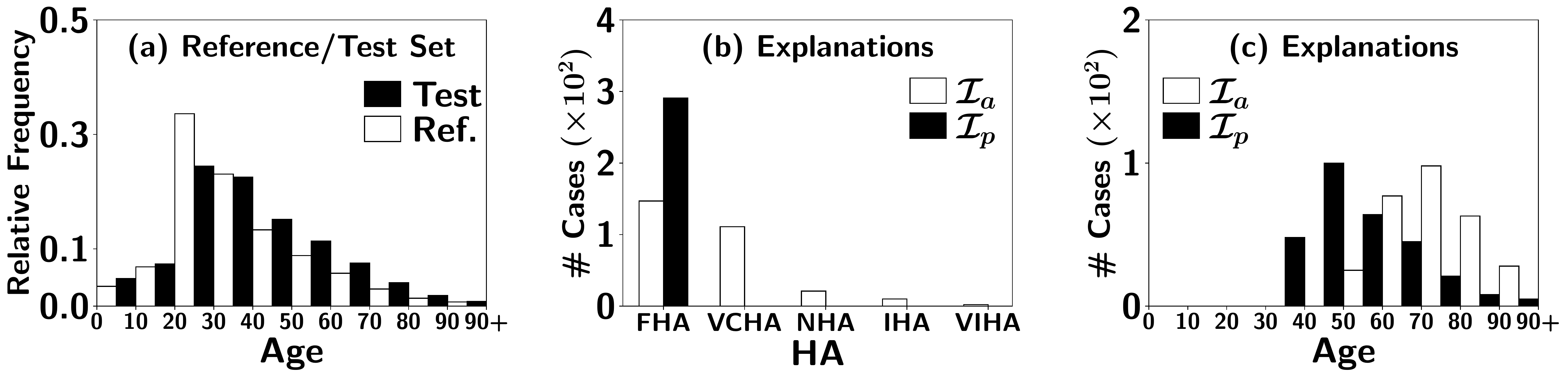}
    \caption{\mc{The histograms of (a) the reference set and the test set; (b) the distributions of the explanations $\mathcal{I}_a$ and $\mathcal{I}_p$ in Example~\ref{ex:lexicographic_order} on HAs; and (c) those on age groups.}}
\label{fig:example_covid}
\end{figure}

\begin{example}\rm
\label{ex:lexicographic_order}
\mc{Let us consider the KS test conducted on the COVID-19 cases discussed in Example~\ref{ex:motivation}. 
The reference set (August) and the test set (September) have 2,175 and 3,375 data points, respectively.
The histograms of the two sets are shown in Figure~\ref{fig:example_covid}a.
Each bin on the X-axis represents an age group. Please refer to Section~\ref{sec:settings} for more details about the dataset.
The two sets fail the KS test with significance level $\alpha=0.05$. }

\mc{Since COVID-19 may be more contagious in regions of larger population, a public health officer may sort the reported cases into a preference list $L_p$ in population descending order of the 
reported health authority (HA for short). Please see \url{https://catalogue.data.gov.bc.ca/dataset/health-authority-boundaries} for details of HA in British Columbia.  The data points from HAs with large population are ranked higher, while the cases from the same HAs are sorted arbitrarily. }

\mc{COVID-19 is also known to hit seniors harder.  Alternatively, the officer may sort the reported cases into a preference list $L_a$ in age group descending order. People in more senior age groups are ranked higher, while the cases from the same age group are sorted arbitrarily.}

\mc{Given preference lists $L_p$ and $L_a$, our method MOCHE produces the corresponding most comprehensible counterfactual explanations $\mathcal{I}_p$ and $\mathcal{I}_a$, respectively. Figures~\ref{fig:example_covid}b and~\ref{fig:example_covid}c show the distributions of the two explanations on HAs and age groups, respectively. 
The X-axis of Figure~\ref{fig:example_covid}b shows the HA ids  from left to right in population descending order.
Both $\mathcal{I}_a$ and  $\mathcal{I}_p$ include 291 data points.}

\mc{As shown in Figure~\ref{fig:example_covid}b, all data points in $\mathcal{I}_p$ are from FHA (Fraser HA), the HA with the largest population. Based on $L_{p}$, we have $\mathcal{I}_p \prec \mathcal{I}_a$ in lexicographical order, that is, $\mathcal{I}_p$ is more preferable than $\mathcal{I}_a$ when HAs of large population are concerned.
As shown in Figure~\ref{fig:example_covid}c, $\mathcal{I}_a$ contains more senior people. Based on $L_{a}$, we have $\mathcal{I}_a \prec \mathcal{I}_p$ in lexicographical order, that is, $\mathcal{I}_a$ is more preferable when senior people are more concerned.} 
\end{example}

\subsection{Existence  and Uniqueness}

For a failed KS test at significance level $\alpha$, our task is to find the most comprehensible counterfactual explanation $\mathcal{I}$ on the KS test. Does such an explanation always exist? If so, is the most comprehensible counterfactual explanation unique?

\begin{proposition}
\label{proposition:unique_solution}
When the significance level $\alpha \leq \frac{2}{e^2}$, there exists a unique most comprehensible explanation on a failed KS test.
\begin{proof}

(Existence) Consider a subset $S \subset T$, where $|S|=|T|-1$. $|T \setminus S|=1$.
The $p$-value of the KS test between $R$ and $T \setminus S$ is $p=c_{\alpha}\sqrt{\frac{n + 1}{n * 1}}$, where $n$ is the size of $R$ and $c_{\alpha}=\sqrt{-\ln(\frac{\alpha}{2})*\frac{1}{2}}$. Since $\alpha \leq \frac{2}{e^2}$, we have $c_{\alpha} \geq 1$ and $p \geq \sqrt{\frac{n+1}{n}}\geq 1$. Since $D(R, T \setminus S)$ is the absolute difference between the two empirical cumulative functions, $1 \geq D(R, T \setminus S)$. Therefore, $p \geq D(R, T \setminus S)$.  That is, $R$ and $T \setminus S$ pass the KS test.
Since an explanation is a smallest subset that reverses a failed KS test to a passed one, there must exist an explanation on a failed KS test given $R$ and $T \setminus S$ passing the test.

(Uniqueness) Since each data point has a unique rank in $L$, two distinct explanations cannot be equivalent in the lexicographical order. Thus, the most comprehensible explanation is unique.
\end{proof}
\end{proposition}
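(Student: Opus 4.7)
The plan is to separate the two assertions of the proposition, since they rest on different mechanisms. Existence depends on the interplay between the threshold $\alpha \leq 2/e^2$ and the p-value formula, while uniqueness follows purely from the fact that $L$ is a strict total order. In both cases I would invoke Definition~\ref{def:interpretation} to translate between ``counterfactual explanation'' and ``smallest subset whose removal reverses the test.''

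For existence, my strategy is to exhibit a single concrete subset $S \subset T$ whose removal is guaranteed to reverse the failed test, and then argue that a minimum-size such subset must exist. The natural candidate is any $S$ with $|T \setminus S| = 1$: substituting $m=1$ into $p = c_\alpha \sqrt{(n+m)/(nm)}$ gives $p = c_\alpha \sqrt{(n+1)/n}$, so it is enough to show $p \geq 1$, because the KS statistic $D(R, T \setminus S)$ is an absolute difference of empirical CDF values and hence lies in $[0,1]$. The key calculation is to verify that $\alpha \leq 2/e^2$ is precisely what forces $c_\alpha \geq 1$: plugging $\alpha = 2/e^2$ into $c_\alpha = \sqrt{-\tfrac{1}{2}\ln(\alpha/2)}$ yields $\sqrt{-\tfrac{1}{2}\ln(e^{-2})} = 1$, and since $c_\alpha$ is decreasing in $\alpha$, any smaller $\alpha$ makes $c_\alpha$ strictly larger. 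Once one qualifying $S$ is in hand, the family of subsets whose removal reverses the test is a non-empty finite collection; its minimum-cardinality members are, by definition, counterfactual explanations.

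For uniqueness, since $L$ assigns each point in $T$ a unique rank, the induced lexicographic order $\prec$ on fixed-size subsets of $T$ is antisymmetric: two distinct subsets of equal size necessarily differ at some first index under $L$, and one of them strictly precedes the other. Since Definition~\ref{def:interpretation} forces every counterfactual explanation to have the same (minimum) cardinality, the collection of explanations is a finite set linearly ordered by $\prec$, so it has a unique minimum. The only nontrivial step in the whole argument is the existence half, and even there the main obstacle is just pinning down the sharp threshold on $\alpha$; everything else is structural.
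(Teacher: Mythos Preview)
Your proposal is correct and matches the paper's proof essentially step for step: both prove existence by removing all but one point from $T$, using $\alpha \leq 2/e^2 \Rightarrow c_\alpha \geq 1$ to force $p \geq 1 \geq D(R, T\setminus S)$, and both derive uniqueness from the fact that $L$ induces a strict total order on equal-size subsets. If anything, your write-up is slightly more explicit about why $c_\alpha$ is monotone in $\alpha$ and why the finite non-empty family has a minimum-cardinality member.
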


Statistical tests in practice typically use a significance level of $0.05$ or lower.  $\frac{2}{e^2} > 0.27$, which is far over the range of significance levels used in statistical tests.  Therefore, our problem formulation is practical and guarantees a unique solution in practice. 
\subsection{A Brute Force Method}
\label{sec:baseline}


A na\"ive method to find the most comprehensible explanation is to enumerate all subsets of the test set $T$ and check against Definition~\ref{def:most_comprehensible_interpretation}.
This brute-force method checks an exponential number of subsets, which is prohibitive for large test sets. 

Even in a brute force method, we can significantly reduce the number of subsets that need to be checked by early pruning a large number of subsets.  According to Definitions~\ref{def:interpretation} and~\ref{def:most_comprehensible_interpretation}, we can sort all subsets of $T$ first by the size, from small to large, and then by the lexicographical order. This can be done by a breadth-first traversal of a set enumeration tree~\cite{10.5555/3087223.3087278}. The first subset $S$ in this order such that $R$ and $T\setminus S$ pass the KS test is the most comprehensible explanation. 
\section{Searching for Explanation Size}
\label{sec:interpretation_size}

In this section, we first describe the two-phase framework of MOCHE (for \underline{MO}st \underline{C}ompre\underline{H}ensible \underline{E}xplanation), a fast method to find the most comprehensible counterfactual explanations. Then, we thoroughly explore how to compute the size of explanations fast.

\subsection{MOCHE}

According to Definition~\ref{def:interpretation}, all explanations have the same size. Once we find an explanation $\mathcal{I}$, we can safely ignore all subsets of $T$ whose sizes are not equal to $|\mathcal{I}|$, no matter they can reverse the KS test or not.  Based on this idea, the MOCHE method proceeds in two phases.  In phase 1, MOCHE tries to find the size of explanations.  In phase 2, MOCHE tries to identify the most comprehensible explanations, that is, the smallest one in lexicographical order.

A subset $S$ of $T$ such that $|S|=h$ is called an \emph{$h$-subset}. An $h$-subset $S$ is a \emph{qualified h-subset} if $R$ and $T \setminus S$ pass the KS test.
The first bottleneck is to check, for a given $h>0$, whether there exists a qualified $h$-subset $S$. 
A brute-force implementation has to conduct the KS test a large number of times on all $h$-subsets. The time complexity is $O({m \choose h}  (m+n-h)\log(n+m-h))$.

Our first major technical result in this section is that checking the existence of a qualified $h$-subset does not have to conduct the KS test on all $h$-subsets. 
With a carefully designed data structure named cumulative vector to represent an $h$-subset of $T$, we establish a fast verification method for qualified cumulative vectors. Checking the existence of a qualified $h$-cumulative vector and thus a qualified $h$-subset only takes $O(m+n)$ time.

The second bottleneck is to find the size of explanations efficiently.  A brute-force method has to search from $1$ to $m-1$ one by one and, for each size $h$, check the $h$-subsets. The second
major technical result in this section tackles this bottleneck by deriving a lower bound $\hat{k}$ on $k$, the size of all explanations. This lower bound reduces the search range of $h$ from $[1, k]$ to $[\hat{k}, k]$, which further reduces the time complexity of phase 1 to $O((m+n)\log(m) + (k - \hat{k})(m+n))$.

\subsection{Cumulative Vectors}
\label{subsec:configuration_of_subset}

Essentially, the KS test compares the cumulative distribution functions of a reference set and a test set.  Since there are only finite numbers of data points in a reference set and a test set, we can represent the cumulative distribution function of a reference set, a test set or a subset of the test set using a sequence of the values of the cumulative distribution function at the data points appearing at either the reference set or the test set.  This observation motivates the design of the cumulative vectors.


We make a \emph{base vector} $\mathbf{V}=\langle x_1, \ldots, x_q\rangle$ from sets $R$ and $T$, such that $x_1, \ldots, x_q$ are the unique data points in $R \cup T$.  
No matter how many times $x_i$ appears in $R \cup T$, it only appears once in $\mathbf{V}$.
Thus, $q = \|R \cup T\|$, where $R$ and $T$ are treated as sets instead of multi-sets and $q$ is the cardinality of the union, that is, duplicate items are not double counted. The elements in $\mathbf{V}$ are sorted in the value ascending order, that is $x_1 < x_2 < \cdots < x_q$.


\begin{definition}
\label{def:configuration}
The \emph{cumulative vector} of an $h$-subset $S\subseteq T$ is a $(q+1)$-dimensional vector $\mathbf{C}_S=\langle c_0, c_1, \dots, c_q \rangle$, where $c_0=0$, and for $1 \leq i \leq q$, $c_i$ is the number of data points in $S$ that are smaller than or equal to $x_i$ in $\mathbf{V}$, that is $c_i = |\{x \in S \mid x \leq x_i\}|$.  We also write $c_i$ as $\mathbf{C}_S[i]$.
\end{definition}

\begin{example}\rm
\label{ex:config_def}
\mc{Consider a test set $T=\{13, 13, 12, 20\}$ and a reference set $R=\{14, 14,  14,  14, 20, 20, 20, 20\}$. The base vector $\mathbf{V}=\langle 12, 13, 14, 20 \rangle$. For a subset $S=\{13, 13\}$ of $T$, the cumulative vector is $\mathbf{C}_S=\langle 0, 0, 2, 2, 2\rangle$. }
\end{example}

According to Definition~\ref{def:configuration}, a cumulative vector $\mathbf{C}_S$ contains all information to derive the cumulative distribution function $F_{T\setminus S}$ straightforwardly.  For a cumulative vector $\mathbf{C}_S=\langle c_0, c_1, \dots, c_q \rangle$ and any $i$ $(1 \leq i \leq q)$, $c_i - c_{i-1}$ is the number of times that $x_i$ appears in $S$. Thus, the value of the empirical cumulative distribution function of $T \setminus S$ at $x_i$ can be computed by
$    F_{T \setminus S}(x_i)=\frac{\mathbf{C}_T[i] - c_i}{m - c_q}$, 
where $\mathbf{C}_T$ is the cumulative vector of $T$ and $\mathbf{C}_T[i]$ is the $i$-th element of $\mathbf{C}_T$ and thus is the number of data points in $T$ that are
not larger than
$x_i$.

Clearly, given a reference set $R$ and a test set $T$, every unique subset $S\subseteq T$ corresponds to a unique cumulative vector $\mathbf{C}_S$ and a unique cumulative distribution function $F_{T\setminus S}$, and vice versa.  Recall that, if a subset $T \setminus S$ and $R$ pass the KS test, $S$ is called a qualified $h$-subset, where $h=|S|$.  Correspondingly, we call the cumulative vector $\mathbf{C}_S$ a qualified $h$-cumulative vector.

\subsection{Existence of Qualified $h$-Cumulative Vectors}
\label{sec:fast_check_existence_of_interpretations}

For a given $h$ $(1\leq h \leq |T|)$, can we quickly determine whether there exists a qualified $h$-cumulative vector and thus a qualified $h$-subset?  Before we state the major result, we need the following.

\begin{lemma}\label{theorem:critical_set_c_i_range}
Given a reference set $R$ and a test set $T$, for $S \subset T$, $\mathbf{C}_S=\langle c_0, c_1, \ldots, c_q\rangle$ is a qualified cumulative vector if and only if, for each $i$ $(1 \leq i \leq q)$, the following two inequalities hold.
\begin{subequations}\small
\label{eq:config_of_critical_set_in_recursive}
\begin{align}
     \max(\lceil \Gamma(i, h) - \Omega(h) \rceil, h-m+\mathbf{C}_T[i], \mathbf{C}_{S}[i-1]) \leq \mathbf{C}_{S}[i] \label{eq:lower_range_of_c_i}\\
     \mathbf{C}_{S}[i] \leq \min(\lfloor \Gamma(i, h) + \Omega(h) \rfloor, \mathbf{C}_T[i] - \mathbf{C}_T[i-1] + \mathbf{C}_{S}[i-1], h)  \label{eq:upper_range_of_c_i}
\end{align}
\end{subequations}
where $\Omega(h) = c_{\alpha}\sqrt{m - h  + \frac{(m - h)^2}{n}}$, $\Gamma(i, h)=\mathbf{C}_T[i] - \frac{m - h}{n} \mathbf{C}_R[i]$, and $\mathbf{C}_R$ and $\mathbf{C}_T$ are the cumulative vectors of $R$ and $T$, respectively. 

\begin{proof}
(Necessity) 
According to the definition of KS statistic in Equation~\ref{eq:ks_statistic}, an $h$-subset $S$ is qualified if and only if $\forall i \; (1 \leq i \leq q)$, 
$    |F_{R}(x_i) - F_{T \setminus S}(x_i)| \leq c_{\alpha}\sqrt{\frac{n + m - h}{n*(m - h)}}$.
Since $F_{R}(x_i) = \frac{\mathbf{C}_{R}[i]}{n}$ and $F_{T \setminus S}(x_i) = \frac{\mathbf{C}_{T}[i] - \mathbf{C}_{S}[i]}{m - h}$, we have 
$    \Big | \frac{\mathbf{C}_{R}[i]}{n} - \frac{\mathbf{C}_{T}[i] - \mathbf{C}_{S}[i]}{m - h}\Big | \leq c_{\alpha}\sqrt{\frac{n+m-h}{n*(m - h)}}$.
After simplification, we have 
$        \Gamma(i, h) - \Omega(h)  \leq \mathbf{C}_{S}[i]
        \leq \Gamma(i, h) + \Omega(h)$.
Since $\mathbf{C}_{S}[i]$ is a non-negative integer, we immediately have
\begin{equation}
\label{eq:round}
    \lceil \Gamma(i, h) - \Omega(h) \rceil  \leq \mathbf{C}_{S}[i] \leq \lfloor  \Gamma(i, h) + \Omega(h) \rfloor.
\end{equation}
Since $h - \mathbf{C}_{S}[i]$ and $m - \mathbf{C}_{T}[i]$ are the numbers of data points in $S$ and $T$ that are larger than $x_i$, respectively, and $S \subset T$, $h - \mathbf{C}_{S}[i] \leq m - \mathbf{C}_{T}[i] $ holds, that is, $h - m + \mathbf{C}_{T}[i] \leq \mathbf{C}_{S}[i]$. Since $\mathbf{C}_{S}[i-1] \leq \mathbf{C}_{S}[i]$, Equation~\ref{eq:lower_range_of_c_i} holds.

Since $\mathbf{C}_{S}[i] - \mathbf{C}_{S}[i-1]$ and $\mathbf{C}_{T}[i] - \mathbf{C}_{T}[i-1]$ are the numbers of times $x_i$ appears in $S$ and $T$, respectively, and $S \subset T$, $\mathbf{C}_{S}[i] - \mathbf{C}_{S}[i-1] \leq \mathbf{C}_{T}[i] - \mathbf{C}_{T}[i-1]$, that is, $\mathbf{C}_{S}[i] \leq \mathbf{C}_{S}[i-1] + \mathbf{C}_{T}[i] - \mathbf{C}_{T}[i-1]$.
Using the righthand side of Equation~\ref{eq:round} and $\mathbf{C}_{S}[i] \leq h$ by definition, Equation~\ref{eq:upper_range_of_c_i} holds. 


(Sufficiency) For any $h$-cumulative vector $\mathbf{C}_{S}$ that satisfies Equations~\ref{eq:lower_range_of_c_i} and~\ref{eq:upper_range_of_c_i}, we construct a set $S$ such that for each $i$ $(1 \leq i \leq q)$, data point $x_i$ appears in $S$ $(\mathbf{C}_{S}[i]-\mathbf{C}_{S}[i-1])$ times. Since $\mathbf{C}_{S}[i]$ and $\mathbf{C}_{S}[i-1]$ satisfy the inequality $\mathbf{C}_{S}[i] - \mathbf{C}_{S}[i-1] \leq \mathbf{C}_{T}[i] - \mathbf{C}_{T}[i-1]$, the number of times $x_i$ appearing in $S$ is smaller than or equal to the number of times $x_i$ appearing in $T$. Plugging $\mathbf{C}_{T}[q] = m$ into Equation~\ref{eq:lower_range_of_c_i}, we have $h \leq \mathbf{C}_{S}[q]$. Since $\mathbf{C}_{S}[q]$ also satisfies Equation~\ref{eq:upper_range_of_c_i}, we have $h = \mathbf{C}_{S}[q]$. From the way that $S$ is constructed, we know that $S$ has $\mathbf{C}_{S}[q]$ elements. Therefore, $S$ is an $h$-subset of $T$.

We show that $S$ is a qualified $h$-subset. Since $\mathbf{C}_{S}$ satisfies Equation~\ref{eq:config_of_critical_set_in_recursive}, for each $i$ $(1 \leq i \leq q)$, $\mathbf{C}_{S}[i]$ satisfies Equation~\ref{eq:round}. Plugging Equation~\ref{eq:round} into $F_{T \setminus S}$, we have $\forall i (1 \leq i \leq q)$, $|F_{R}(x_i) - F_{T \setminus S}(x_i)| \leq c_{\alpha}\sqrt{\frac{n+m-h}{n*(m - h)}}$. This means $R$ and $T \setminus S$ can pass the KS test, thus $S$ is a qualified $h$-subset of $T$ and $\mathbf{C}_{S}$ is a qualified $h$-cumulative vector. The sufficiency follows.
\end{proof}
\end{lemma}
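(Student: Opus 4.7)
The plan is to prove the two directions separately, exploiting the observation that the KS-test condition translates into pointwise analytic bounds on $|F_R(x_i) - F_{T\setminus S}(x_i)|$, while the containment $S \subset T$ and the cumulative nature of $\mathbf{C}_S$ translate into combinatorial bounds; the lemma then asserts that the two families of bounds are jointly necessary and sufficient.

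For necessity, I would start from the fact that if $S$ is a qualified $h$-subset then, for every $i$, $|F_R(x_i) - F_{T\setminus S}(x_i)| \le c_\alpha \sqrt{(n+m-h)/(n(m-h))}$, which is just the defining inequality of the KS test applied at each jump point. Substituting $F_R(x_i) = \mathbf{C}_R[i]/n$ and $F_{T\setminus S}(x_i) = (\mathbf{C}_T[i] - \mathbf{C}_S[i])/(m-h)$, clearing denominators, and collecting terms yields the continuous sandwich $\Gamma(i,h) - \Omega(h) \le \mathbf{C}_S[i] \le \Gamma(i,h) + \Omega(h)$. Integrality of $\mathbf{C}_S[i]$ then lets me ceiling the lower bound and floor the upper one. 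I would then record three structural facts coming from $S \subset T$ and the definition of the cumulative vector: (i) the multiplicity of $x_i$ in $S$ is at most its multiplicity in $T$, giving $\mathbf{C}_S[i] \le \mathbf{C}_S[i-1] + \mathbf{C}_T[i] - \mathbf{C}_T[i-1]$; (ii) the count of elements of $S$ that exceed $x_i$ is at most that of $T$, giving $\mathbf{C}_S[i] \ge h - m + \mathbf{C}_T[i]$; and (iii) monotonicity and the trivial range $\mathbf{C}_S[i-1] \le \mathbf{C}_S[i] \le h$. Taking the maximum of the lower bounds and the minimum of the upper bounds reconstructs the two stated inequalities exactly.

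For sufficiency, I would build an explicit realizer: given any integer vector $\mathbf{C}_S$ satisfying the inequalities, place $\mathbf{C}_S[i] - \mathbf{C}_S[i-1]$ copies of $x_i$ into $S$. Monotonicity makes the differences nonnegative, and the structural upper bound keeps them below the multiplicity of $x_i$ in $T$, so $S$ is a well-defined multi-subset of $T$. Evaluating the inequalities at $i = q$ together with $\mathbf{C}_T[q] = m$ pins $\mathbf{C}_S[q] = h$, so $|S| = h$. Finally, reversing the algebra of the necessity direction shows that $|F_R(x_i) - F_{T\setminus S}(x_i)|$ stays within the KS threshold at every $x_i$, so $S$ is a qualified $h$-subset.

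The main obstacle, I expect, is the accounting: the stated bounds mix an analytic ingredient (from the KS threshold) with three combinatorial ingredients, and the proof must keep them separated clearly enough that sufficiency does not secretly require more than what is assumed. In particular, I want to argue explicitly that the structural bounds are \emph{not} implied by the analytic sandwich and therefore must be carried as independent conditions; otherwise the construction in the sufficiency half could yield an $S$ that escapes $T$ as a multi-set, or whose total size differs from $h$.
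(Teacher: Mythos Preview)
Your proposal is correct and follows essentially the same approach as the paper: in both directions you derive the analytic sandwich $\Gamma(i,h)\pm\Omega(h)$ from the KS threshold and combine it with the three combinatorial constraints coming from $S\subset T$ (multiplicity, tail count, monotonicity/size), and for sufficiency you construct $S$ by placing $\mathbf{C}_S[i]-\mathbf{C}_S[i-1]$ copies of $x_i$ and then pin $|S|=h$ via the $i=q$ case. The paper's proof is organized identically, so there is nothing to add.
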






Lemma~\ref{theorem:critical_set_c_i_range} transforms conducting the KS test to checking Equations~\ref{eq:lower_range_of_c_i} and~\ref{eq:upper_range_of_c_i}.
Given $h$ $(1 \leq h \leq m-1)$, Equations~\ref{eq:lower_range_of_c_i} and~\ref{eq:upper_range_of_c_i} recursively give a lower bound and an upper bound of each element $\mathbf{C}[i]$ $(1 \leq i \leq q)$ of an $h$-cumulative vector $\mathbf{C}$,  respectively.
The lower bound and the upper bound of $\mathbf{C}[i]$ depend on the lower bound and the upper bound of $\mathbf{C}[i-1]$, respectively. 

Denote by $l_i^h$ and $u_i^h$ the lower bound and the upper bound of $\mathbf{C}[i]$ in any qualified $h$-cumulative vector $\mathbf{C}$. We compute $l_1^h$ and $u_1^h$ by plugging $\mathbf{C}[0] = 0$ into Equations~\ref{eq:lower_range_of_c_i} and~\ref{eq:upper_range_of_c_i}. Then, we plug $\mathbf{C}[1] = l_1^h$ into Equation~\ref{eq:lower_range_of_c_i} and $\mathbf{C}[1] = u_1^h$ into Equation~\ref{eq:upper_range_of_c_i} to compute the lower bound $l_2^h$ and the upper bound $u_2^h$ of $\mathbf{C}[2]$, respectively. By iteratively plugging $\mathbf{C}[i-1] = l^h_{i-1}$ into Equation~\ref{eq:lower_range_of_c_i} and $\mathbf{C}[i-1] = u^h_{i-1}$ into Equation~\ref{eq:upper_range_of_c_i}, we can compute the lower bound and the upper bound of every $\mathbf{C}[i]$ of qualified $h$-cumulative vectors $\mathbf{C}$. The closed form formulae of $l_i^h$ and $u_i^h$ $(1 \leq i \leq q)$ are 
\begin{subequations}
\begin{align}
    l^h_i &= \max(\lceil M(i, h) - \Omega(h) \rceil, h - m + \mathbf{C}_{T}[i], 0)  \label{eq:unroll_bounds_of_c_i_lower}\\
    u^h_i &= \min(\lfloor \Gamma(i, h) + \Omega(h) \rfloor, \mathbf{C}_{T}[i], h) \label{eq:unroll_bounds_of_c_i_upper}
\end{align}
\end{subequations}where $M(i, h) = \max_{j=1}^i\{ \Gamma(j, h)\}$. We define $l_0^h = u_0^h = 0$, as $\mathbf{C}[0] = 0$ is a constant.

\mc{Given the lower bounds $l_i^h$ and the upper bounds $u_i^h$ of the element in any qualified $h$-cumulative vectors, if for each $i$ $(1 \leq i \leq q)$, $l^h_i \leq u^h_i$, we can construct an $h$-cumulative vector $\mathbf{C}$ by selecting each element $\mathbf{C}[i]$ from $[l^h_i, u^h_i]$. Based on this intuition,
we use the lower bounds and the upper bounds of $\mathbf{C}[1], \mathbf{C}[2], \ldots, \mathbf{C}[q]$ to derive a sufficient and necessary condition for the existence of a qualified $h$-cumulative vector $\mathbf{C}$ as follows.}

\begin{theorem}
\label{theorem:l_u_bound_fast_check}
Given the KS test with a reference set $R$ and a test set $T$, for $h$ $(1 \leq h \leq m-1)$, there exists a qualified $h$-cumulative vector if and only if for each $i$ $(1 \leq i \leq q)$, $l^h_i \leq u^h_i$.
\end{theorem}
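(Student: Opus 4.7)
The plan is to prove the two directions separately by leveraging Lemma~\ref{theorem:critical_set_c_i_range}, which characterizes qualified $h$-cumulative vectors via a system of recursive inequalities, together with the closed-form bounds in Equations~\ref{eq:unroll_bounds_of_c_i_lower} and~\ref{eq:unroll_bounds_of_c_i_upper}. The central conceptual point I would establish first is that $l^h_i$ and $u^h_i$ are exactly the values obtained by unrolling the recursive bounds of Lemma~\ref{theorem:critical_set_c_i_range} from $\mathbf{C}[0]=0$ up to index $i$: the monotonicity term $\mathbf{C}_{S}[i-1]$ in the recursive lower bound telescopes into the running maximum $M(i,h)=\max_{j\leq i}\Gamma(j,h)$, and the increment term $\mathbf{C}_T[i]-\mathbf{C}_T[i-1]+\mathbf{C}_{S}[i-1]$ in the recursive upper bound telescopes down to $\mathbf{C}_T[i]$. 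With this observation, the theorem reduces to showing that non-emptiness of the unrolled intervals is equivalent to the existence of a feasible sequence through the recursive intervals.

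For the necessity direction, I would assume that a qualified $h$-cumulative vector $\mathbf{C}$ exists and prove by induction on $i$ that $l^h_i \leq \mathbf{C}[i] \leq u^h_i$, which immediately forces $l^h_i \leq u^h_i$ for every $i$. The base case $i=0$ is trivial as $\mathbf{C}[0]=l^h_0=u^h_0=0$. For the inductive step, applying Lemma~\ref{theorem:critical_set_c_i_range} at every $j \leq i$ together with the induction hypothesis gives $\mathbf{C}[i]\geq \mathbf{C}[j]\geq \lceil \Gamma(j,h)-\Omega(h)\rceil$ and $\mathbf{C}[i]\geq h-m+\mathbf{C}_T[i]$, yielding $\mathbf{C}[i]\geq l^h_i$. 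The upper-bound half is symmetric, using the telescoping of the increment term.

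For the sufficiency direction, assuming $l^h_i \leq u^h_i$ for all $i$, I would construct an $h$-cumulative vector $\mathbf{C}$ by a greedy rule and then invoke Lemma~\ref{theorem:critical_set_c_i_range} to conclude. The clean choice is to set $\mathbf{C}[0]=0$ and $\mathbf{C}[i] = \max(\mathbf{C}[i-1], \lceil \Gamma(i,h) - \Omega(h) \rceil, h - m + \mathbf{C}_T[i])$ for $i\geq 1$, so that the recursive lower bound of Lemma~\ref{theorem:critical_set_c_i_range} is tight by construction. A simple induction shows $\mathbf{C}[i]=l^h_i$, and hence $\mathbf{C}[i]\leq u^h_i$ by hypothesis, which gives the recursive upper bound of Lemma~\ref{theorem:critical_set_c_i_range} once we verify the remaining one-step compatibility $\mathbf{C}[i]\leq \mathbf{C}[i-1] + \mathbf{C}_T[i] - \mathbf{C}_T[i-1]$.

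The main obstacle is exactly this last compatibility check: the pointwise hypothesis $l^h_i\leq u^h_i$ does not on its face control the jump $l^h_i - l^h_{i-1}$. I plan to handle it by a case analysis on which of the three terms in the definition of $l^h_i$ achieves the maximum. When $l^h_i = h - m + \mathbf{C}_T[i]$, the increment equals $\mathbf{C}_T[i]-\mathbf{C}_T[i-1]$ minus a non-negative slack, which is admissible. When $l^h_i = \lceil M(i,h)-\Omega(h)\rceil$, either $M(i,h)=M(i-1,h)$ (so $l^h_i=l^h_{i-1}$ and the increment is $0$) or $M(i,h)=\Gamma(i,h)>M(i-1,h)$, in which case $\Gamma(i,h)-\Gamma(i-1,h) = (\mathbf{C}_T[i]-\mathbf{C}_T[i-1])-\tfrac{m-h}{n}(\mathbf{C}_R[i]-\mathbf{C}_R[i-1]) \leq \mathbf{C}_T[i]-\mathbf{C}_T[i-1]$ by non-negativity of $\mathbf{C}_R$'s increments. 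The zero case is trivial. Combining these gives $l^h_i - l^h_{i-1}\leq \mathbf{C}_T[i]-\mathbf{C}_T[i-1]$, completing the construction and hence the theorem.
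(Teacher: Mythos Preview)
Your proof is correct. The necessity argument matches the paper's (you spell out the induction that the paper leaves as ``straightforward''). The sufficiency argument is where you and the paper diverge.

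The paper constructs a witness \emph{backward}: it sets $\mathbf{C}[q]=u^h_q$ and, for $i$ from $q$ down to $1$, picks $\mathbf{C}[i-1]\in[l^h_{i-1},u^h_{i-1}]$ satisfying $0\le \mathbf{C}[i]-\mathbf{C}[i-1]\le \mathbf{C}_T[i]-\mathbf{C}_T[i-1]$. Feasibility of this choice follows from the two easy monotonicity facts $l^h_{i-1}\le l^h_i$ and $u^h_i\le u^h_{i-1}+(\mathbf{C}_T[i]-\mathbf{C}_T[i-1])$, both of which drop out of the closed forms in Equations~\ref{eq:unroll_bounds_of_c_i_lower}--\ref{eq:unroll_bounds_of_c_i_upper} without case analysis. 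You instead construct \emph{forward} with the deterministic choice $\mathbf{C}[i]=l^h_i$, which forces you to prove the less immediate inequality $l^h_i-l^h_{i-1}\le \mathbf{C}_T[i]-\mathbf{C}_T[i-1]$ and hence the case split on which term in $l^h_i$ attains the maximum. Your case analysis is valid (in the $M(i,h)=\Gamma(i,h)$ branch, the ceiling is harmless because $\mathbf{C}_T[i]-\mathbf{C}_T[i-1]$ is an integer and $\lceil a\rceil-\lceil b\rceil\le k$ whenever $a-b\le k$ with $k\in\mathbb{Z}$; in the $M(i,h)=M(i-1,h)$ branch you should say $l^h_i\le l^h_{i-1}$ rather than $l^h_i=l^h_{i-1}$, but the conclusion is unchanged). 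What your route buys is a fully explicit, deterministic witness ($\mathbf{C}=l^h$) and a transparent link between the recursive bounds of Lemma~\ref{theorem:critical_set_c_i_range} and the unrolled closed forms; what the paper's route buys is avoiding the case analysis altogether by working from the upper bounds, where the increment constraint is structurally immediate.
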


\begin{proof}

(Necessity) Since $l_i^h$ and $u_i^h$ are the lower bound and the upper bound of $\mathbf{C}[i]$, respectively, the necessity is straightforward.

(Sufficiency) Assuming for each $i$ $(1 \leq i \leq q)$, $l_{i}^h \leq u_{i}^h$, we construct a qualified $h$-cumulative vector $\mathbf{C}$ as follows. We start by setting $\mathbf{C}[q] = u_q$, and then for $i$ iterating from $q$ to $1$, we choose an integer $\mathbf{C}[i-1]$ from $[l_{i-1}^h, u_{i-1}^h]$, such that 
$    0 \leq \mathbf{C}[i] - \mathbf{C}[i-1] \leq \mathbf{C}_{T}[i] - \mathbf{C}_{T}[i-1]$.

Now we show that such an integer $\mathbf{C}[i-1]$ always exists. Since $l_i^h$ is derived by setting $\mathbf{C}[i-1] = l_{i-1}^h$ in Equation~\ref{eq:lower_range_of_c_i} and $u_{i}^h$ is derived by setting $\mathbf{C}[i-1] = u_{i-1}^h$ in Equation~\ref{eq:upper_range_of_c_i}, we have $l^h_{i-1} \leq l^h_{i}$ and  $u^h_{i} \leq u^h_{i-1} + \mathbf{C}_{T}[i] - \mathbf{C}_{T}[i-1]$.
Since $l_i^h \leq \mathbf{C}[i] \leq u^h_i$, we have 
$    l^h_{i-1} \leq \mathbf{C}[i] \leq u^h_{i-1} + \mathbf{C}_{T}[i] - \mathbf{C}_{T}[i-1]$.
Since $l^h_{i-1} \leq u^h_{i-1}$ and they are integers, there exists an integer $\mathbf{C}[i-1] \in [l^h_{i-1}, u^h_{i-1}]$, such that $0 \leq \mathbf{C}[i] - \mathbf{C}[i-1] \leq \mathbf{C}_{T}[i] - \mathbf{C}_{T}[i-1]$. Thus, an $h$-cumulative vector $\mathbf{C}$ can be constructed by  iteratively applying the above operations to set up elements in $\mathbf{C}$. 

Last, we prove that $\mathbf{C}$ is a qualified $h$-cumulative vector by showing that for each $i$ $(1 \leq i \leq q)$, $\mathbf{C}[i]$ satisfies Equations~\ref{eq:lower_range_of_c_i} and \ref{eq:upper_range_of_c_i}. 
According to the definition of an $h$-cumulative vector, we have $\mathbf{C}[i-1] \leq \mathbf{C}[i]$. 
By Equation~\ref{eq:unroll_bounds_of_c_i_lower}, we have $h - m + \mathbf{C}_{T}[i] \leq l_i^h$ and $\lceil \Gamma(i, h) - \Omega(h) \rceil \leq l_i^h$. 
Since $l_i^h \leq \mathbf{C}[i]$, $\mathbf{C}[i]$ satisfies Equation~\ref{eq:lower_range_of_c_i}.  By Equation~\ref{eq:unroll_bounds_of_c_i_upper}, we have $u_i^h \leq h$ and $u_i^h \leq \lfloor \Gamma(i, h) + \Omega(h) \rfloor$. According to how $\mathbf{C}[i-1]$ is selected, $\mathbf{C}[i]$ and $\mathbf{C}[i-1]$ satisfy $\mathbf{C}[i] - \mathbf{C}[i-1] \leq \mathbf{C}_{T}[i] - \mathbf{C}_{T}[i-1]$. Since $\mathbf{C}[i] \leq u_i^h$, $\mathbf{C}[i]$ satisfies Equation~\ref{eq:upper_range_of_c_i}. The sufficiency follows Lemma~\ref{theorem:critical_set_c_i_range} immediately.
\end{proof}

According to Theorem~\ref{theorem:l_u_bound_fast_check}, we can efficiently check the existence of a qualified $h$-cumulative vector by checking the $q$ pairs of lower bounds and upper bounds, $(l_1^h, u_1^h), \ldots, (l_q^h, u_q^h)$. Each pair of bounds can be computed and checked in $O(1)$ time. 
Since $q \leq n + m$, the time complexity of checking the existence of a qualified $h$-cumulative vector is $O(n+m)$. 

Since the existence of a qualified $h$-cumulative vector is equivalent to the existence of a qualified $h$-subset, 
we can tackle the first efficiency bottleneck by checking the $q$ pairs of lower bounds and upper bounds. 
This reduces the time complexity of checking the existence of a qualified $h$-subset from $O({m \choose h}(m + n - h)\log(m + n - h))$ to $O(n + m)$. 

To find the size of explanations, for each subset size $h$ $(1 \leq h \leq m-1)$, we need to apply Theorem~\ref{theorem:l_u_bound_fast_check} to check the existence of a qualified $h$-cumulative vector. Therefore, the overall time complexity of finding the size of explanations is $O(m(m+n))$. Next, we further reduce the time complexity to $O((m+n) \log(m) + (m+n)(k - \hat{k}))$, where $\hat{k}$ is a lower bound on the size of explanations $k$.

\begin{example}\rm
\label{ex:bounds}
\mc{One can verify that the reference set and the test set in Example~\ref{ex:config_def} fail the KS test with significance level $\alpha = 0.3$. When $h=1$, the lower bound $l_2^h = 2$ and the upper bound $u_2^h =1$. As $l_2^h > u_2^h$, by Theorem~\ref{theorem:l_u_bound_fast_check}, there does not exist a qualified $1$-cumulative vector. When $h=2$, $(l_1^h, u_1^h)=(0, 1)$,  $(l_2^h, u_2^h)=(1, 2)$, $(l_3^h, u_3^h)=(1, 2)$ and $(l_4^h, u_4^h)=(1, 2)$. By Theorem~\ref{theorem:l_u_bound_fast_check}, there exists a qualified $2$-cumulative vector and thus a qualified $2$-subset. Since the smallest size of a qualified subset is $2$, the explanation size $k=2$.}
\end{example}

\subsection{Finding a Lower Bound on Explanation Size by Binary Search}
\label{subsec:binary_search}

To tackle the second efficiency bottleneck, in this subsection, we develop a technique to find a lower bound on the size of explanations in $O((m+n) \log m)$ time. Using this technique, to find the size of explanations, we only need to check the subset sizes that are larger than or equal to the lower bound. 

To reduce the number of subset sizes to be checked, we develop a necessary condition for the existence of a qualified $h$-cumulative vector with respect to $h$. \mc{The necessary condition is obtained by relaxing the sufficient and necessary condition stated in Theorem~\ref{theorem:l_u_bound_fast_check}.} The necessary condition has a nice monotonicity with respect to $h$. If an integer $h$ $(1 \leq h \leq m-2)$ satisfies the condition,  all integers from $h+1$ to $m-1$ also satisfy the necessary condition.
\mc{This is because the right hand side of each inequality in Equation~\ref{eq:binary_search_eq_sys} increases faster than its left hand side as $h$ increases.} Thus, we can leverage this property to find a lower bound $\hat{k}$ of the explanation size $k$ by a binary search in $O((m+n) \log m)$ time. The lower bound reduces the search range of $k$ from $[1, k]$ to $[\hat{k}, k]$. This helps us further reduce the complexity of phase 1 in MOCHE from $O(m(n+m))$ to $O((n+m)\log(m) + (k - \hat{k})(n+m))$.

\begin{theorem}
\label{theorem:necessary_condition}
Given the KS test with a reference set $R$ and a test set $T$, for $h$ $(1 \leq h \leq m-1)$, there exists a qualified $h$-cumulative vector only if for each $i$ $(1 \leq i \leq q)$, the following holds.
\begin{subequations}
\label{eq:binary_search_eq_sys}
\begin{align}
    0 &\leq \lfloor \Gamma(i, h) + \Omega(h) \rfloor \label{eq:k_condition_1} \\
    \lceil M(i, h) - \Omega(h) \rceil &\leq h \label{eq:k_condition_3}\\
    M(i, h) - \Omega(h) &\leq  \Gamma(i, h) + \Omega(h) \label{eq:k_relax_inequality}
\end{align}
\end{subequations}
Moreover, if Equation~\ref{eq:binary_search_eq_sys} holds for $h>0$, then it also holds for $h+1$.

\begin{proof}
We first prove the necessary condition. 
Since there exists a qualified $h$-cumulative vector, by Theorem~\ref{theorem:l_u_bound_fast_check}, for each $i$ $(1 \leq i \leq q)$, $l_i^h \leq u_i^h$. $l_i^h$ and $u_i^h$ are the maximum and the minimum of the three terms in Equations~\ref{eq:unroll_bounds_of_c_i_lower} and~\ref{eq:unroll_bounds_of_c_i_upper}, respectively. Thus, every term in $u_i^h$ is larger than or equal to every term in $l_i^h$. Therefore, we immediately have Equations~\ref{eq:k_condition_1} and~\ref{eq:k_condition_3}, as well as the following. 
\begin{equation}
    \lceil M(i, h) - \Omega(h) \rceil \leq \lfloor \Gamma(i, h) + \Omega(h) \rfloor
    \label{eq:system_ineq_6}
\end{equation}
Since $M(i, h) - \Omega(h) \leq \lceil M(i, h) - \Omega(h) \rceil$ and $\lfloor \Gamma(i, h) + \Omega(h) \rfloor \leq \Gamma(i, h) + \Omega(h)$, Equation~\ref{eq:k_relax_inequality} follows Equation~\ref{eq:system_ineq_6} immediately. 

Next, we prove the monotonicity of the necessary condition with respect to $h$. For each inequality in Equation~\ref{eq:binary_search_eq_sys}, we show that for each $i$ $(1 \leq i \leq q)$, if a size $h$ $(1 \leq h \leq m - 2)$ satisfies the inequality, the size $h+1$ also satisfies the inequality.

\textbf{Equation~\ref{eq:k_condition_1}:}
Plugging the definitions of $\Gamma(i, h)$ and $\Omega(h)$ into Equation~\ref{eq:k_condition_1}, the inequality can be simplified to $\frac{\mathbf{C}_{T}[i]}{m - h} - \frac{\mathbf{C}_{R}[i]}{n} \geq -c_{\alpha}\sqrt{\frac{1}{m - h} + \frac{1}{n}}$.  Since $-c_{\alpha}\sqrt{\frac{1}{m - h} + \frac{1}{m}} > -c_{\alpha}\sqrt{\frac{1}{m - h - 1} + \frac{1}{m}}$ and $\frac{\mathbf{C}_{T}[i]}{m - h - 1} \geq \frac{\mathbf{C}_{T}[i]}{m - h}$, we have $\frac{\mathbf{C}_{T}[i]}{m - h - 1} - \frac{\mathbf{C}_{R}[i]}{n} > -c_{\alpha}\sqrt{\frac{1}{m - h - 1} + \frac{1}{n}}$, which can be simplified to $0 \leq \lfloor \Gamma(i, h + 1) + \Omega(h + 1) \rfloor$.

\textbf{Equation~\ref{eq:k_condition_3}:}
Plugging the definition of $M(i, h)$ into Equation~\ref{eq:k_condition_3}, we have $\lceil \Gamma(j, h) - \Omega(h) \rceil \leq h$, for each integer $j$ $(1 \leq j \leq i)$. 
Plugging the definitions of $\Gamma(j, h)$ and $\Omega(h)$ into the inequality, the inequality can be simplified to $\frac{\mathbf{C}_{T}[j]-h}{m - h} - \frac{\mathbf{C}_{R}[j]}{n} \leq c_{\alpha}\sqrt{\frac{1}{m - h} + \frac{1}{n}}$. Since $c_{\alpha}\sqrt{\frac{1}{m-h} + \frac{1}{n}} < c_{\alpha}\sqrt{\frac{1}{m-h-1} + \frac{1}{n}}$ and $\frac{\mathbf{C}_{T}[j] - h - 1}{m - h - 1} < \frac{\mathbf{C}_{T}[j] - h}{m - h}$, we have $\frac{\mathbf{C}_{T}[j] - h - 1}{m - h - 1} - \frac{\mathbf{C}_{R}[j]}{n} \leq c_{\alpha}\sqrt{\frac{1}{m - h - 1} + \frac{1}{n}}$, which can be simplified to $\Gamma(j, h + 1) - \Omega(h + 1)  \leq h$. Since $h$ is an integer, we immediately have $\lceil \Gamma(j, h + 1) - \Omega(h + 1) \rceil \leq h$. Applying the definition of $M(i, h)$, we have $\lceil M(i, h+1) - \Omega(h+1) \rceil \leq h$.

\textbf{Equation~\ref{eq:k_relax_inequality}:}
According to the definition of $M(i, h)$, from Equation~\ref{eq:k_relax_inequality}, we have $\Gamma(j, h) - \Omega(h) \leq  \Gamma(i, h) + \Omega(h)$, for each integer $j$ $(1 \leq j \leq i)$. Plugging the definitions of $\Omega(h)$ and $\Gamma(j, h)$ into the inequality, the inequality can be simplified to $-2c_{\alpha}\sqrt{\frac{1}{m - h} + \frac{1}{n}} \leq \frac{\mathbf{C}_{T}[i] - \mathbf{C}_{T}[j]}{m - h} - \frac{1}{n}(\mathbf{C}_{R}[i] - \mathbf{C}_{R}[j])$. Since $\frac{\mathbf{C}_{T}[i] - \mathbf{C}_{T}[j]}{m - h - 1} > \frac{\mathbf{C}_{T}[i] - \mathbf{C}_{T}[j]}{m - h}$ and $-2c_{\alpha}\sqrt{\frac{1}{m - h} + \frac{1}{n}}   > -2c_{\alpha}\sqrt{\frac{1}{m - h-1} + \frac{1}{n}}$,  we have $-2c_{\alpha}\sqrt{\frac{1}{m - h - 1} + \frac{1}{n}} < \frac{\mathbf{C}_{T}[i] - \mathbf{C}_{T}[j]}{m-h-1} - \frac{1}{n}(\mathbf{C}_{R}[i] - \mathbf{C}_{R}[j])$, which can be simplified to $\Gamma(j, h + 1) - \Omega(h + 1) \leq  \Gamma(i, h + 1) + \Omega(h + 1)$.  Applying the definition of $M(i, h)$, we have $M(i, h + 1) - \Omega(h + 1) \leq  \Gamma(i, h + 1) + \Omega(h + 1)$.
\end{proof}
\end{theorem}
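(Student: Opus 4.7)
The plan is to split the argument into two independent parts, mirroring the two assertions: first, that the existence of a qualified $h$-cumulative vector forces Equations~\ref{eq:k_condition_1}--\ref{eq:k_relax_inequality} to hold, and second, that the conjunction of those three inequalities is preserved under incrementing $h$. The first part should fall out of Theorem~\ref{theorem:l_u_bound_fast_check} essentially for free; the second part is a careful but elementary calculation on how $\Gamma$, $\Omega$, and $M$ behave as $h \to h+1$.

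For the necessary condition, I would invoke Theorem~\ref{theorem:l_u_bound_fast_check} to conclude that $l_i^h \le u_i^h$ for every $i$. Since $l_i^h$ is a maximum of three quantities and $u_i^h$ is a minimum of three quantities (as given in Equations~\ref{eq:unroll_bounds_of_c_i_lower} and~\ref{eq:unroll_bounds_of_c_i_upper}), each term appearing in $l_i^h$ is bounded above by each term appearing in $u_i^h$. Selecting the appropriate pairs gives $0 \le \lfloor \Gamma(i,h) + \Omega(h) \rfloor$ (Equation~\ref{eq:k_condition_1}) by pairing the ``$0$'' from the lower max with the floor term in the upper min, and $\lceil M(i,h) - \Omega(h) \rceil \le h$ (Equation~\ref{eq:k_condition_3}) by pairing the ceiling term with $h$. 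Finally, pairing the ceiling in the lower bound with the floor in the upper bound gives $\lceil M(i,h) - \Omega(h) \rceil \le \lfloor \Gamma(i,h) + \Omega(h) \rfloor$, and dropping the rounding in both directions yields Equation~\ref{eq:k_relax_inequality}.

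For the monotonicity claim, I would handle each of the three inequalities individually, in each case rewriting everything in terms of the quantities $\frac{\mathbf{C}_T[i]}{m-h}$, $\frac{\mathbf{C}_R[i]}{n}$, and $c_\alpha\sqrt{\frac{1}{m-h}+\frac{1}{n}}$ (after clearing $m-h$ from $\Omega$ and $\Gamma$). The key monotone facts are: $\frac{1}{m-h} < \frac{1}{m-h-1}$, which makes the $c_\alpha\sqrt{\cdot}$ slack grow when $h$ increases by $1$; and for the $h$-dependent numerators ($\mathbf{C}_T[i]$ in Equation~\ref{eq:k_condition_1}, $\mathbf{C}_T[i]-h$ in Equation~\ref{eq:k_condition_3}, $\mathbf{C}_T[i]-\mathbf{C}_T[j]$ in Equation~\ref{eq:k_relax_inequality}), the direction of change of the $1/(m-h)$ multiplier is aligned with the required direction of inequality. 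Concretely, for Equation~\ref{eq:k_condition_1} the right-hand-side slack widens while the left-hand-side increases, and similarly for Equations~\ref{eq:k_condition_3} and~\ref{eq:k_relax_inequality}. For Equation~\ref{eq:k_relax_inequality} one must first expand $M(i,h)$ as $\max_{1\le j\le i}\Gamma(j,h)$ and verify the inequality for every such $j$, after which reapplying the maximum recovers the statement for $h+1$.

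The main obstacle I anticipate is the bookkeeping for Equation~\ref{eq:k_condition_3}: the ceiling is around a term involving $M(i,h)$, so I must argue the inequality holds for each $j \le i$ and that after the substitution $h \mapsto h+1$ the strict inequality in the continuous version still collapses to a non-strict integer inequality after applying the ceiling. The other minor subtlety is verifying that passing from $h$ to $h+1$ does not violate the domain condition $h+1 \le m-1$; provided we assume $h \le m-2$ in the monotonicity claim, each term $m-h-1$ is strictly positive and every manipulation is well defined.
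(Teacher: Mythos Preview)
Your proposal is correct and follows essentially the same approach as the paper: the necessary condition is obtained from Theorem~\ref{theorem:l_u_bound_fast_check} by pairing individual terms of the max/min defining $l_i^h,u_i^h$ (with the ceiling/floor inequality relaxed to give Equation~\ref{eq:k_relax_inequality}), and the monotonicity is proved inequality-by-inequality after dividing through by $m-h$ and, for the two inequalities involving $M(i,h)$, expanding the maximum over $j\le i$. The subtleties you flag (the ceiling in Equation~\ref{eq:k_condition_3} and the restriction $h\le m-2$) are exactly those that arise in the paper's argument.
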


The smallest integer $\hat{k}$ that satisfies the necessary condition in Theorem~\ref{theorem:necessary_condition} is a lower bound on the size $k$ of the explanations. We do not need to check any $h$-subset smaller than $\hat{k}$, as they are guaranteed not to contain a qualified $h$-cumulative vector. 
Based on the monotonicity of Equation~\ref{eq:binary_search_eq_sys} with respect to $h$, we can apply binary search to find the smallest integer $\hat{k}$ that satisfies Theorem~\ref{theorem:necessary_condition}. 
For $h \in [1, m-1]$, it takes $O(n+m)$ time to verify the $q$ groups of inequalities in Theorem~\ref{theorem:necessary_condition}, because $q \leq n + m$. Therefore, the overall time complexity of finding $\hat{k}$ is $O((m + n) \log m)$.  Once $\hat{k}$ is found, we iteratively use Theorem~\ref{theorem:l_u_bound_fast_check} to find the exact size of explanations.  The overall time complexity of finding the exact size of explanation is $O((m+n)\log m + (m+n)(k-\hat{k}))$, where $k$ is the exact size.  In the worst case, $k-\hat{k}=O(m)$, and the complexity is still $O(m(m+n))$.  However, as verified by our experiments, $k - \hat{k}$ is often a very small number and our technique can significantly improve the efficiency of searching the size of explanations.

\begin{example}\rm
\label{ex:binary_search}
\mc{Consider the failed KS test in Example~\ref{ex:bounds}. We apply binary search to find the lower bound $\hat{k} \in [1, 3]$. We start with $h=\lfloor(1+3)/2\rfloor=2$ and find that $h=2$ satisfies Theorem~\ref{theorem:necessary_condition}. Thus, $\hat{k}\leq 2$. Then, we search the left half $[1, 2]$ and set $h=\lfloor(1+2)/2\rfloor=1$. As $\lceil M(1, h) - \Omega(h) \rceil = 2$, Equation~\ref{eq:k_condition_3} does not hold and thus $h=1$ does not satisfy Theorem~\ref{theorem:necessary_condition}. 
This concludes that $\hat{k}=2$.}
\end{example}
\section{Generating Most Comprehensible Explanations}
\label{sec:generate_interpretation}

Given the size of explanations $k$, the brute force method takes $O({m \choose k}(m+n-k)\log(m+n-k))$ time to find the most comprehensible explanation by enumerating the $k$-subsets of $T$. In this section, we develop a method to directly construct the most comprehensible explanation in $O(m(n+m))$ time without enumerating the $k$-subsets.

\label{sec:construct_most_comp_interpretation}

An $h$-subset $S \subset T$ is called an \emph{$h$-partial explanation} if there exists an explanation that is a superset of $S$. When it is clear from the context, we also call $S$ a \emph{partial explanation} for short. 

According to Definition~\ref{def:interpretation}, the most comprehensible explanation is the explanation that has the smallest lexicographical order. This property facilitates the design of our construction algorithm. Our algorithm scans the data points in $T$ in the order of $L$ and selects the first data point $x_{i_1}$ that is in an explanation, that is, $x_{i_1}$ is a $1$-partial explanation.  Since $x_{i_1}$ is the first such data point in $L$, the most comprehensible explanation must contain $x_{i_1}$, otherwise we have the contradiction that the explanation containing $x_{i_1}$ precedes the most comprehensible explanation in the lexicographical order.  Then, the algorithm continues to scan the points after $x_{i_1}$ in $L$, still in the order of $L$, and finds the next data point $x_{i_2}$ such that $\{x_{i_1}, x_{i_2}\}$ are part of an explanation, that is, $\{x_{i_1}, x_{i_2}\}$ is a $2$-partial explanation.  Clearly, $\{x_{i_1}, x_{i_2}\}$ is part of the most comprehensive explanation.  The search continues until $k$ points are obtained, which is the most comprehensible explanation.  The construction method is summarized in Algorithm~\ref{algo:interpretation_generation}. 

\begin{algorithm}[t]
\caption{Find the most comprehensible explanation}
\label{algo:interpretation_generation}
\small
\KwIn{a reference set $R$, a test set $T$, a significance level $\alpha$, a preference list $L$, the size of explanations $k$}
\KwOut{$I:=$ the most comprehensible explanation}
\BlankLine

Initialize $I \leftarrow \emptyset$

$T \leftarrow $ sort the data points in $T$ in the order of $L$

\For{$i \leftarrow 1; i \leq |T|; i++$}{
    
    \lIf{$I \cup \{T[i]\}$ \text{is a partial explanation}}{
        $I \leftarrow I \cup \{T[i]\}$
    }
    
    \lIf{$|I| = k$}{
        \Return $I$
    }
}
\end{algorithm}

\label{sec:construct_check_partial_interpretation}

Now, the remaining question is how we can determine whether an $h$-subset $S$ is a partial explanation. We first establish that a subset $S$ is a partial explanation if and only if there exists a qualified $k$-cumulative vector, which satisfies a small group of inequalities derived from $S$. Then, we introduce a sufficient and necessary condition for the existence of such a $k$-cumulative vector, which can be efficiently checked in $O(n+m)$ time.

\begin{lemma}
\label{lemma:subset_interpretation_existence}
Given the KS test with a reference set $R$ and a test set $T$, for a subset $S \subset T$, $S$ is a partial explanation if and only if there exists a qualified $k$-cumulative vector $\mathbf{C}$, such that the following inequality holds for $1 \leq i \leq q$,
\begin{equation}
\label{eq:configuration_satisfy_s}
    \mathbf{C}[i] - \mathbf{C}[i-1] \geq \mathbf{C}_{S}[i] - \mathbf{C}_{S}[i-1]
\end{equation}
\begin{proof}
(Necessity) Since $S$ is a partial explanation, by definition, there exists an explanation $\mathcal{I}$ such that $S \subseteq \mathcal{I}$. Denote by $\mathbf{C}$ the qualified $k$-cumulative vector of $\mathcal{I}$. For each $i$ $(1 \leq i \leq q)$, since $\mathbf{C}[i] - \mathbf{C}[i-1]$ and $\mathbf{C}_{S}[i] - \mathbf{C}_{S}[i-1]$ are the numbers of times $x_i$ appearing in $\mathcal{I}$ and $S$, respectively, and $S \subseteq \mathcal{I}$, $\mathbf{C}[i] - \mathbf{C}[i-1] \geq \mathbf{C}_{S}[i] - \mathbf{C}_{S}[i-1]$ holds. The necessity follows.

(Sufficiency) Assume a qualified $k$-cumulative vector $\mathbf{C}$ that satisfies Equation~\ref{eq:configuration_satisfy_s}. Let $\mathcal{I}$ be the explanation corresponding to $\mathbf{C}$. For each data point $x_i \in S$, $x_i$ appears $\mathbf{C}_{S}[i] - \mathbf{C}_{S}[i-1]$ times in $S$.  Due to Equation~\ref{eq:configuration_satisfy_s}, $x_i$ appears in $\mathcal{I}$ the same or more number of times. Thus, $S \subseteq \mathcal{I}$ and $S$ is a partial explanation.
\end{proof}
\end{lemma}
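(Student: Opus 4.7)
The plan is to prove both directions of the equivalence by exploiting the one-to-one correspondence between subsets of $T$ and their cumulative vectors, together with the observation from Section~\ref{subsec:configuration_of_subset} that $\mathbf{C}[i] - \mathbf{C}[i-1]$ is precisely the multiplicity of the data point $x_i$ in the subset represented by $\mathbf{C}$. The whole proof is a translation of the multi-set containment $S \subseteq \mathcal{I}$ into an element-wise inequality on these multiplicities, and the ``qualified'' qualifier on $\mathbf{C}$ encodes exactly that the corresponding subset is an explanation.

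For the necessity direction, I would start by unfolding the definition of a partial explanation: there exists an explanation $\mathcal{I}$ (a qualified $k$-subset of $T$) with $S \subseteq \mathcal{I}$ in the multi-set sense. I would let $\mathbf{C}$ be the cumulative vector of $\mathcal{I}$, which is qualified by Definition~\ref{def:configuration} together with the assumption that $\mathcal{I}$ is an explanation. For each $i$, the multiplicity of $x_i$ in $\mathcal{I}$ is $\mathbf{C}[i] - \mathbf{C}[i-1]$ and the multiplicity of $x_i$ in $S$ is $\mathbf{C}_{S}[i] - \mathbf{C}_{S}[i-1]$. Since $S \subseteq \mathcal{I}$ as multi-sets, the former is at least the latter for every $i$, which is exactly Equation~\ref{eq:configuration_satisfy_s}.

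For the sufficiency direction, I would take a qualified $k$-cumulative vector $\mathbf{C}$ satisfying Equation~\ref{eq:configuration_satisfy_s} and construct the candidate explanation $\mathcal{I}$ by placing $\mathbf{C}[i] - \mathbf{C}[i-1]$ copies of $x_i$ into $\mathcal{I}$ for each $i$ $(1 \leq i \leq q)$. Because $\mathbf{C}$ is qualified, the argument already used in the sufficiency part of Lemma~\ref{theorem:critical_set_c_i_range} guarantees that $\mathcal{I}$ is a $k$-subset of $T$ and $R$, $T \setminus \mathcal{I}$ pass the KS test, so $\mathcal{I}$ is an explanation. Finally, Equation~\ref{eq:configuration_satisfy_s} implies that for every $i$, the multiplicity of $x_i$ in $\mathcal{I}$ is at least the multiplicity in $S$, so $S \subseteq \mathcal{I}$ as multi-sets, establishing that $S$ is a partial explanation.

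The proof is essentially bookkeeping once the cumulative vector viewpoint is in place, so I do not anticipate a real obstacle; the only subtle point is being careful about multi-set containment versus set containment — the identity ``multiplicity of $x_i$ equals $\mathbf{C}[i] - \mathbf{C}[i-1]$'' must be used consistently on both $\mathcal{I}$ and $S$, and the conclusion ``$S \subseteq \mathcal{I}$'' should be interpreted in the multi-set sense so that it is compatible with how $T$ itself is treated throughout the paper.
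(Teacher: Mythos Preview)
Your proposal is correct and follows essentially the same approach as the paper: both directions rest on the identification of $\mathbf{C}[i]-\mathbf{C}[i-1]$ with the multiplicity of $x_i$, so that multi-set containment $S\subseteq\mathcal{I}$ is equivalent to Equation~\ref{eq:configuration_satisfy_s}, and ``qualified $k$-cumulative vector'' is equivalent to ``$\mathcal{I}$ is an explanation.'' Your version is slightly more explicit in the sufficiency direction (constructing $\mathcal{I}$ from $\mathbf{C}$ and invoking Lemma~\ref{theorem:critical_set_c_i_range}), whereas the paper simply says ``let $\mathcal{I}$ be the explanation corresponding to $\mathbf{C}$,'' but the argument is the same.
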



Next,  we derive a sufficient and necessary condition for the existence of such a $k$-cumulative vector $\mathbf{C}$ by investigating the lower bound and the upper bound of each element $\mathbf{C}[i]$. Since $\mathbf{C}$ is a qualified $k$-cumulative vector, by Theorem~\ref{theorem:l_u_bound_fast_check}, $l_i^k \leq \mathbf{C}[i] \leq u^k_i$. 
Equation~\ref{eq:configuration_satisfy_s} can be rewritten as $\mathbf{C}[i-1] \leq \mathbf{C}[i] - \mathbf{C}_{S}[i] + \mathbf{C}_{S}[i-1]$. That is, the upper bound of $\mathbf{C}[i-1]$ dependents on the upper bound of $\mathbf{C}[i]$. Denote by $\bar{l}_i=l_i^k$ a lower bound of $\mathbf{C}[i]$ and by $\bar{u}_i$ an upper bound of $\mathbf{C}[i]$.  Since $\mathbf{C}[i] \leq \bar{u}_{i}$ and $\mathbf{C}[i-1] \leq u_{i-1}^k$, about the upper bounds we have, for $i$ $(1 \leq i \leq q)$,
\begin{equation}\label{eq:u_bar}
    \bar{u}_{i-1} = \min(u^k_{i-1}, \bar{u}_{i} - \mathbf{C}_{S}[i] + \mathbf{C}_{S}[i-1]).
\end{equation}

Given the size of explanations $k$, we first compute $u_i^k$ for each $i$ $(1 \leq i \leq q)$ by Equation~\ref{eq:unroll_bounds_of_c_i_upper}. Then, we iteratively compute $\bar{u}_i$ for each $i$ $(0 \leq i \leq q)$. We define $\bar{u}_q = u^k_q$ and plug $\bar{u}_q$ into Equation~\ref{eq:u_bar} to compute $\bar{u}_{q-1}$, and iteratively compute the upper bound of each $\mathbf{C}[i]$ of a qualified $k$-cumulative vector $\mathbf{C}$ that satisfies Equation~\ref{eq:configuration_satisfy_s}. 

Since $\bar{u}_i$ depends on $u_i^k$, once the size of explanations $k$ is determined using the techniques developed in Section~\ref{sec:interpretation_size}, we can compute the value $\bar{u}_i$. \mc{Based on a similar intuition as Theorem~\ref{theorem:l_u_bound_fast_check},} we can use the lower bound $\bar{l}_i$ and the upper bound $\bar{u}_i$ to derive a sufficient and necessary condition for the existence of a qualified $k$-cumulative vector $\mathbf{C}$ that satisfies Equation~\ref{eq:configuration_satisfy_s} as stated in the following result,
and thus decide whether an $h$-subset $S$ is a partial explanation.

\begin{theorem}
\label{theorem:correctness_algorithm_interpretation_superset_check}
Given the KS test with a reference set $R$ and a test set $T$, for a subset $S \subset T$, there exists a qualified $k$-cumulative vector $\mathbf{C}$ that satisfies Equation~\ref{eq:configuration_satisfy_s} if and only if for each $i$ $(0 \leq i \leq q)$, $\bar{l}_i \leq \bar{u}_i$.
\begin{proof}
(Sufficiency) Given $S$, assume for each $i$ $(0 \leq i \leq q)$, $\bar{l}_i \leq \bar{u}_i$. We construct a $k$-cumulative vector $\mathbf{C}$ such that for each $i$ $(0 \leq i \leq q)$, $\mathbf{C}[i] = \bar{u}_i$. We show that $\mathbf{C}$ is a qualified $k$-cumulative vector and also satisfies Equation~\ref{eq:configuration_satisfy_s}.

We first prove that $\mathbf{C}$ is a qualified $k$-cumulative vector by showing that $\mathbf{C}[0] = 0$, and each $\mathbf{C}[i]$ $(1 \leq i \leq q)$ satisfies Equations~\ref{eq:lower_range_of_c_i} and~\ref{eq:upper_range_of_c_i}. Since $l^k_0 = \bar{l}_0 \leq \bar{u}_0 \leq u^k_0 = 0$, we have $\bar{l}_0 = \bar{u}_0 = 0$ and thus $\mathbf{C}[0] = 0$.
Plugging $\mathbf{C}[i-1] = \bar{u}_{i-1}$ and $\mathbf{C}[i] = \bar{u}_{i}$ into Equation~\ref{eq:u_bar}, we have $\mathbf{C}[i-1] \leq \mathbf{C}[i]$. Since $\bar{l}_i = l^k_i \leq \mathbf{C}[i]$, from Equation~\ref{eq:unroll_bounds_of_c_i_lower}, we have $\lceil \Gamma(i, h) - \Omega(h) \rceil \leq \mathbf{C}[i]$ and $h-m+\mathbf{C}_{T}[i] \leq \mathbf{C}[i]$. Therefore, $\mathbf{C}[i]$ satisfies Equation~\ref{eq:lower_range_of_c_i}.

Plugging $\mathbf{C}[i-1] = \bar{u}_{i-1}$ into Equation~\ref{eq:u_bar}, the value of $\mathbf{C}[i-1]$  falls into one of the following two cases. 

Case 1: $\mathbf{C}[i-1] = u^k_{i-1}$. As $u_i^k$ is derived by plugging $\mathbf{C}[i-1] = u_{i-1}^k$ into Equation~\ref{eq:upper_range_of_c_i}, we have $u^k_i \leq \mathbf{C}_{T}[i] - \mathbf{C}_{T}[i-1] + u^k_{i-1}$. Since $\mathbf{C}[i] \leq u_i^k$ and $\mathbf{C}[i-1] = u^k_{i-1}$, we have $\mathbf{C}[i] \leq  \mathbf{C}_{T}[i] - \mathbf{C}_{T}[i-1] + \mathbf{C}[i-1]$.

Case 2: $\mathbf{C}[i-1] = \bar{u}_{i} - \mathbf{C}_{S}[i] + \mathbf{C}_{S}[i-1]$. Since $S \subset T$,  $\mathbf{C}_{S}[i] - \mathbf{C}_{S}[i-1] \leq \mathbf{C}_{T}[i] - \mathbf{C}_{T}[i-1]$. Since $\mathbf{C}[i] = \bar{u}_i$, we have $\mathbf{C}[i] \leq \mathbf{C}_{T}[i] - \mathbf{C}_{T}[i-1] + \mathbf{C}[i-1]$.

Since $\mathbf{C}[i] = \bar{u}_i \leq u^k_i$, from Equation~\ref{eq:unroll_bounds_of_c_i_upper}, we have $\mathbf{C}[i] \leq k$ and $\mathbf{C}[i] \leq \lfloor \Gamma(i, k) + \Omega(k) \rfloor$. Therefore, $\mathbf{C}[i]$ satisfies Equation~\ref{eq:upper_range_of_c_i}. By Lemma~\ref{theorem:critical_set_c_i_range}, $\mathbf{C}$ is a qualified $k$-cumulative vector.

Since for each $i$ $(0 \leq i \leq q)$, $\mathbf{C}[i]=\bar{u}_i$, from Equation~\ref{eq:u_bar}, we can derive that every $\mathbf{C}[i]$ satisfies Equation~\ref{eq:configuration_satisfy_s}. 

(Necessity) Given $S$, assume a qualified $k$-cumulative vector $\mathbf{C}$ that satisfies Equation~\ref{eq:configuration_satisfy_s}. Since $\bar{l}_i$ and $\bar{u}_i$ are the lower and the upper bound of $\mathbf{C}[i]$, respectively, the necessity follows immediately.
\end{proof}
\end{theorem}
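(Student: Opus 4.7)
The plan is to prove both directions by exploiting the recursive structure of $\bar{u}_i$, with Lemma~\ref{theorem:critical_set_c_i_range} and Theorem~\ref{theorem:l_u_bound_fast_check} doing the heavy lifting. For necessity, given a qualified $k$-cumulative vector $\mathbf{C}$ satisfying Equation~\ref{eq:configuration_satisfy_s}, I would argue that $\bar{l}_i \leq \mathbf{C}[i] \leq \bar{u}_i$ for every $i$. The lower bound is immediate from Theorem~\ref{theorem:l_u_bound_fast_check} since $\bar{l}_i = l_i^k$. For the upper bound, I would use backward induction on $i$: the base case $\mathbf{C}[q] \leq u_q^k = \bar{u}_q$ follows directly; for the inductive step, rearranging Equation~\ref{eq:configuration_satisfy_s} yields $\mathbf{C}[i-1] \leq \mathbf{C}[i] - \mathbf{C}_S[i] + \mathbf{C}_S[i-1] \leq \bar{u}_i - \mathbf{C}_S[i] + \mathbf{C}_S[i-1]$, while Theorem~\ref{theorem:l_u_bound_fast_check} gives $\mathbf{C}[i-1] \leq u_{i-1}^k$, so $\mathbf{C}[i-1]$ is at most the minimum, which is $\bar{u}_{i-1}$. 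Combined with $\bar{l}_i \leq \bar{u}_i$ from the hypothesis we would in fact get nothing new for necessity, so the core content is really just the inductive bookkeeping described above.

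For sufficiency, the natural candidate is to set $\mathbf{C}[i] := \bar{u}_i$ for every $i$ and then verify three things: (i) $\mathbf{C}[0]=0$, (ii) $\mathbf{C}$ is a qualified $k$-cumulative vector via Lemma~\ref{theorem:critical_set_c_i_range}, and (iii) $\mathbf{C}$ satisfies Equation~\ref{eq:configuration_satisfy_s}. Item (i) is immediate because $\bar{l}_0 \leq \bar{u}_0$ squeezed between $l_0^k = u_0^k = 0$ forces $\bar{u}_0 = 0$. For (ii), the lower-bound condition (Equation~\ref{eq:lower_range_of_c_i}) is easy: $\mathbf{C}[i] = \bar{u}_i \geq \bar{l}_i = l_i^k$ already dominates the first two terms of the max, and the monotonicity term $\mathbf{C}[i-1] \leq \mathbf{C}[i]$ reduces to $\bar{u}_{i-1} \leq \bar{u}_i$, which holds because $\mathbf{C}_S[i-1] \leq \mathbf{C}_S[i]$ implies the second argument of the min defining $\bar{u}_{i-1}$ is $\leq \bar{u}_i$.

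The main obstacle is verifying the upper-bound condition (Equation~\ref{eq:upper_range_of_c_i}) together with (iii). Two of the three min-terms are handled directly by $\mathbf{C}[i] = \bar{u}_i \leq u_i^k$, but the middle term $\mathbf{C}[i] \leq \mathbf{C}_T[i] - \mathbf{C}_T[i-1] + \mathbf{C}[i-1]$ requires a case split on which argument realizes $\bar{u}_{i-1} = \min\bigl(u_{i-1}^k,\; \bar{u}_i - \mathbf{C}_S[i] + \mathbf{C}_S[i-1]\bigr)$. In the first case, I would invoke the recurrence defining $u_i^k$ from $u_{i-1}^k$ (from Equation~\ref{eq:unroll_bounds_of_c_i_upper} and Lemma~\ref{theorem:critical_set_c_i_range}) to get $\bar{u}_i \leq u_i^k \leq \mathbf{C}_T[i] - \mathbf{C}_T[i-1] + u_{i-1}^k = \mathbf{C}_T[i] - \mathbf{C}_T[i-1] + \mathbf{C}[i-1]$; in the second case, $\mathbf{C}[i] - \mathbf{C}[i-1] = \mathbf{C}_S[i] - \mathbf{C}_S[i-1] \leq \mathbf{C}_T[i] - \mathbf{C}_T[i-1]$ since $S \subset T$. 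The same case split also yields (iii): in the second case Equation~\ref{eq:configuration_satisfy_s} holds with equality, while in the first case the defining min guarantees $u_{i-1}^k \leq \bar{u}_i - \mathbf{C}_S[i] + \mathbf{C}_S[i-1]$, which rearranges exactly to $\mathbf{C}[i] - \mathbf{C}[i-1] \geq \mathbf{C}_S[i] - \mathbf{C}_S[i-1]$. Once all three items are in place, Lemma~\ref{theorem:critical_set_c_i_range} certifies $\mathbf{C}$ as a qualified $k$-cumulative vector satisfying Equation~\ref{eq:configuration_satisfy_s}, completing the proof.
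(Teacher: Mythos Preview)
Your proposal is correct and follows essentially the same route as the paper: set $\mathbf{C}[i]=\bar{u}_i$, verify the hypotheses of Lemma~\ref{theorem:critical_set_c_i_range} with the same case split on which argument realizes the min in Equation~\ref{eq:u_bar}, and handle necessity by observing that $\bar{l}_i,\bar{u}_i$ are genuine bounds on any admissible $\mathbf{C}[i]$. Your write-up is in fact more explicit than the paper's in two places (the backward induction for necessity and the verification of Equation~\ref{eq:configuration_satisfy_s} via the same case split), but the underlying argument is identical.
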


\begin{example}\rm
\label{ex:interpretation_construction}
\mc{Consider the failed KS test in Example~\ref{ex:bounds}, where the size of explanations $k$ is 2. Suppose a user provides a preference list $L=[t_4, t_3, t_2, t_1]$. 
We initialize the constructed explanation $I=\emptyset$ and scan the data points in $T$ in the order of $L$. For the first scanned data point $t_4=20$, we check if $S=I \cup \{t_4\}$ is a partial explanation. 
By Equation~\ref{eq:u_bar}, the upper bound $\bar{u}_3=1$.
As the lower bound $\bar{l}_3 = l_3^k = 2 > \bar{u}_3$, by Theorem~\ref{theorem:correctness_algorithm_interpretation_superset_check}, $S$ is not a partial explanation and thus $t_4$ is not in any explanations. }

\mc{We repeat the same step for the second scanned data point $t_3=12$. When $S=\{t_3\}$, $(\bar{l}_0, \bar{u}_0)=(0, 0)$, $(\bar{l}_1, \bar{u}_1)= (0, 1)$, and $(\bar{l}_2, \bar{u}_2) = (\bar{l}_3, \bar{u}_3) = (\bar{l}_4, \bar{u}_4)=(2, 2)$.
By Theorem~\ref{theorem:correctness_algorithm_interpretation_superset_check}, $S$ is a partial explanation and thus we add $t_3$ to $I$. The third scanned data point $t_2=13$ is added to $I$ for the same reason. As the size of $I=\{t_3, t_2\}$ is equal to $k$, $I$ is the most comprehensible explanation on the failed KS test.}
\end{example}

Given an explanation size $k$ and a subset $S$, it takes $O(m+n)$ time to verify the $q+1$ groups of inequalities in Theorem~\ref{theorem:correctness_algorithm_interpretation_superset_check}, because $q \leq n + m$. Since for each data point $t_i$ in $T$, we need to check whether $I \cup \{t_i\}$ is a partial explanation, where $I$ is the partial explanation found so far, the overall time complexity of constructing the most comprehensible explanation is $O(m(n+m))$. 

As shown in Section~\ref{sec:interpretation_size}, it takes $O((m+n)\log(m)) + O((n+m)(k - \hat{k}))=O((m+n)(\log m+k-\hat{k}))$ time to identify the explanation size. In total, our method takes $O(m(n+m))$ time to find the most comprehensible explanation for a failed KS test. 
\section{Experiments}
\label{sec:exp}

In this section, we evaluate the effectiveness of most comprehensible counterfactual explanations, and the efficiency and scalability of MOCHE. We describe the datasets and the 
experiment settings
in Section~\ref{sec:settings}.  Counterfactual explanations on a failed KS test have two fundamental requirements, being small and reversing the failed KS test. 
In Section~\ref{subsec:exp_small}, we evaluate the size of our explanations. In Section~\ref{sec:exp_effective}, we evaluate whether our explanations can reverse failed KS tests.
In Section~\ref{subsec:exp_effective}, we investigate the effectiveness of our method.
Last, in Section~\ref{subsec:exp_scale}, we verify the efficiency and scalability of our proposed method. 

\subsection{Datasets and Experiment Settings}
\label{sec:settings}

\subsubsection{Dataset Construction}

\mc{We conduct experiments using a COVID-19 dataset and 6 univariate time series datasets in the Numenta Anomaly Benchmark (NAB) repository~\cite{lavin2015evaluating}.}

\paragraph{\mc{COVID-19 Data}}

\mc{The COVID-19 dataset\footnote{http://www.bccdc.ca/health-info/diseases-conditions/covid-19/data} is described in Examples~\ref{ex:motivation} and~\ref{ex:lexicographic_order}.   
The 10 age groups in the dataset are encoded from young to old by integers from 1 to 10. 
We use the cases reported in August and September 2020 to build the reference set and the test set, respectively.
The KS test fails at significance level $0.05$, which indicates that the infected cases in those two months unlikely follow the same distribution on age groups.  In Section~\ref{subsec:exp_effective}, as a case study we interpret the failed KS test to find the data points that 
may likely be relevant to the failure.}

\mc{We use the population descending order of the HAs to generate the preference list $L$ of data points in the test set. The data points from the same HAs are sorted arbitrarily. We obtain the populations of the HAs from the website of Statistics Canada\footnote{\url{https://www12.statcan.gc.ca/census-recensement/2016/dp-pd/prof/index.cfm}}.}

\paragraph{Time Series Data}
Each dataset in the NAB repository contains $6$ to $10$ time series and each time series contains 1,000 to 20,000 observations. For each time series, the ground truth labels of abnormal observations are available. The AWS server metrics (AWS) dataset contains the time series of the CPU Utilization, Network Bytes In, and Disk Read Bytes of an AWS server. The online advertisement clicks (AD) dataset contains the time series of online advertisement clicking rates and cost per thousand impressions. The freeway traffic (TRF) dataset contains the time series of occupancy, speed, and travel time of freeway traffics collected by specific sensors. The Tweets (TWT) dataset contains the time series of numbers of Twitter mentions of publicly-traded companies such as Google, IBM, and Apple. The miscellaneous known causes (KC) dataset contains the time series from multiple domains, including machine temperature, number of NYC taxi passengers, and CPU usage of an AWS server. The artificial (ART) dataset contains the artificially-generated time series with varying types of distribution drifts~\cite{kifer2004detecting}. Table~\ref{tbl:datasets} shows some statistics of the datasets.

\begin{table}[t]
\scalebox{0.80}{
\begin{tabular}{|l|c|c|}
\hline\small
Dataset    & \# Time series & Length               \\ \hline
AWS   & 17   & 1,243 $\sim$ 4,700       \\ \hline
AD & 6    & 1,538 $\sim$ 1,624       \\ \hline
TRF    & 7    & 1,127 $\sim$ 2,500     \\ \hline
TWT      & 10   & 15,831 $\sim$ 15,902  \\ \hline
KC & 7    & 1,882 $\sim$ 22,695    \\ \hline
ART & 6    & 4032               \\ \hline
\end{tabular}
}
\caption{Some statistics of the datasets.}\vspace{-5mm}
\label{tbl:datasets}
\end{table}

We run a sliding window $W$ of size $w$ to obtain the reference set, and use the window of the same size following $W$ immediately without any overlap as the test set.  
The reference set and the test set are muti-sets consisting of the observation values in corresponding sliding windows.
The KS test is conducted multiple times as the sliding windows run through a time series. A failed KS test indicates that the time series has a distribution drift~\cite{kifer2004detecting}. We interpret the failed KS test to find the data points that are likely relevant to the failure. 
 
The significance level of the KS test is always set to $0.05$ following the convention in statistical testing. We use a variety of window sizes, including 100, 200, 300, 1,000, 1,500, and 2,000.

We apply a widely used time series outlier detection method, Spectral Residual~\cite{ren2019time} to automatically generate the preference lists $L$ of data points in the test sets. This preference list $L$ reflects a user's domain knowledge about data abnormality. Data points with larger outlying scores are ranked higher in $L$. The data points with the same outlying scores are sorted randomly. We use the published Python codes of Spectral Residual\footnote{https://github.com/SeldonIO/alibi-detect} with the default parameters.

\subsubsection{Baselines} 

To the best of our knowledge, interpreting failed KS tests has not been studied in literature. To evaluate the performance of MOCHE (M for short in figures), we design six baselines.



\textbf{Greedy} (GRD for short) generates a counterfactual explanation $\mathcal{I}$ by greedily selecting the first $l$ data points in $L$ such that $R$ and $T \setminus \mathcal{I}$ can pass the KS test. When the preference list is generated by an outlier detection method, Greedy can be regarded as an extension of the outlier detection method to interpret failed KS tests.

\textbf{Extended-CornerSearch} (CS for short) is extended from CornerSearch~\cite{croce2019sparse}, a state-of-the-art $L_{0}$-norm adversarial attack method on image classifiers. CornerSearch generates adversarial images by randomly searching a small portion of the top-$K$ important pixels of input images and masking them to $0$ or $1$. Although CornerSearch is not proposed to interpret failed KS tests, it may be extended to serve the purpose. The Extended-CornerSearch treats data points as pixels and perturbs the selected data points $\mathcal{I}$ by removing them from $T$. 
After applying each perturbations, it conducts the KS test on $R$ and $T \setminus \mathcal{I}$ to check if $\mathcal{I}$ is an explanation, meaning passing the KS test so that the unchanged part is regarded as normal by classifiers.

\textbf{Extended-GRACE} (GRC for short) is a direct extension from GRACE~\cite{grace}, the state-of-the-art counterfactual explanation method on neural networks. To interpret a prediction on an input vector $\mathbf{x}$, GRACE perturbs the most important $K$ features of $\mathbf{x}$, which are ranked by an external method, to change the prediction. GRACE only accepts vectors as inputs and generates explanations by minimizing a target classifier's prediction scores. Correspondingly, we extend GRACE to interpret failed KS tests by first accommodating the inconsistency between the inputs of GRACE and our problem through a mapping from an $m$-dimensional vector $\mathbf{x}$ to a subset $S \subseteq T$, where $m=|T|$. We project $\mathbf{x}$ to its nearest 0-1 vector and put the $i$-th data point $t_i$ into $S$ if the $i$-th element of the vector is $0$. Next, we extend the objective function of GRACE to find explanations on failed KS tests by perturbing $\mathbf{x}$ to minimize 
$g(\mathbf{x})=\sqrt{\frac{n*(m-|S|)}{n+(m-|S|)}} D(R, T \setminus S)$,
where $S$ is the set of data points picked by vector $\mathbf{x}$. Based on the definition of the KS test, $S$ is an explanation on the failed KS test if $g(\mathbf{x})$ is smaller than the critical value $c_{\alpha}$. Since $g(\mathbf{x})$ is not differentiable, we adopt the zeroth order optimization algorithm in~\cite{cheng2018query} to solve the problem. We skip the entropy-based feature selection step used in GRACE, as it requires access to training data of classifiers, which is not available in our problem setting. 

\mc{\textbf{Extended-D3} is extended from D3~\cite{10.5555/1182635.1164145}, an outlier detection method on data streams. Given a set of historical data points $X$, a new coming data point $x$ is detected as an outlier if $x$ has a low probability density in $X$. As D3 is not designed for interpreting failed KS tests, we extend D3 to serve the purpose. The Extended-D3 selects the data points in $T$ that have high probability densities in $T$ and low probability densities in $R$. Specifically, denote by $f_{R}$ and $f_{T}$ the estimated probability density functions of $R$ and $T$, respectively. Extended-D3 sorts the data points $t_i$ in $T$ in $\frac{f_{T}(t_i)}{f_{R}(t_i)}$ descending order. Then, it greedily selects the first $l$ data points such that $R$ and $T \setminus \mathcal{I}$ can pass the KS test. By default, $f_{R}$ and $f_{T}$ are learned using the same way as D3. For the COVID-19 dataset, as the data values are discrete, we use the empirical probability mass functions of $R$ and $T$ as $f_{R}$ and $f_{T}$, respectively. As Extended-D3 cannot take user preferences as input, it cannot produce comprehensible explanations. When the context is clear, we call this baseline method \textbf{D3} for short.}

\mc{\textbf{Extended-STOMP} (STMP for short) is extended from STOMP~\cite{yeh2016matrix}, a widely used anomalous subsequence detection method on time series. Given a regular time series  $\mathbf{N}$, a query time series $\mathbf{Q}$, and a subsequence length $q$, STOMP aims to detect anomalous subsequences of length $q$ (each is called a $q$-subsequence) in $\mathbf{Q}$. STOMP applies z-normalization on each subsequence and detects subsequences with anomalous shapes~\cite{yeh2016matrix}. }

\mc{We extend STOMP to interpret the failed KS tests conducted on the time series datasets. For a failed KS test, let $\mathbf{N}$ and $\mathbf{Q}$ be the corresponding time series segments of the reference set and the test set, respectively. Extended-STOMP sorts the $q$-subsequences of $\mathbf{Q}$ by their anomalous scores in decreasing order. Then, the algorithm greedily selects the data points from the first $l$ subsequences such that $R$ and $T \setminus \mathcal{I}$ can pass the KS test. Same as D3, Extended-STOMP cannot produce comprehensible explanations. }

\mc{\textbf{Extended-Series2Graph} (S2G for short) is extended from Series2Graph~\cite{boniol2020series2graph}, a state-of-the-art anomalous subsequence detection method on time series. Series2Graph takes the same input as STOMP. 
It detects  $q$-subsequences of $\mathbf{Q}$ with anomalous shapes by learning a subsequence embedding model. We extend Series2Graph to interpret failed KS tests in the same way as Extended-STOMP. Same as D3, S2G cannot produce comprehensible explanations. }

\subsubsection{Parameter Settings} 
By default the significance level in all KS tests is fixed to $0.05$.

We adopt the same parameter setting used in~\cite{croce2019sparse} for CS. For GRC, we set $K=100$ to be consistent with mc{CS} and set the remaining parameters to the same as~\cite{grace}. We use the same parameters as~\cite{cheng2018query} for the zeroth optimization algorithm used in GRC. 
\mc{The parameters of D3 are set to the same as~\cite{10.5555/1182635.1164145}.
We test STMP and S2G with a variety  of $q$ values, including $5\%|T|$, $10\%|T|$, $20\%|T|$,  and $40\%|T|$. Since $q=5\%|T|$ outperforms the other settings on producing small explanations, we choose $q=5\%|T|$ for STMP and S2G in all experiments. The remaining parameters of STMP and S2G are set to the same as~\cite{yeh2016matrix} and ~\cite{boniol2020series2graph}, respectively.}

We use the published Python codes of STOMP\footnote{\url{https://matrixprofile.org}} and Series2Graph\footnote{\url{http://helios.mi.parisdescartes.fr/~themisp/series2graph/}}. The remaining algorithms are implemented in Python. All experiments are conducted on a server with two Xeon(R) Silver 4114 CPUs (2.20GHz), four Tesla P40 GPUs, 400GB main memory, and a 1.6TB SSD running Centos 7 OS. Our source code is published on GitHub \url{https://github.com/research0610/MOCHE}.

Since CS and GRC cannot return explanations for all failed KS tests in 24 hours, for each combination of time series and window size, we uniformly sample 10 failed KS tests, where the test sets contain the corresponding ground truth of abnormal observations. We conduct all experiments on the sampled $2,690$ failed KS tests. 

\subsection{Conciseness}
\label{subsec:exp_small}

Small explanations help users focus on predominant factors in a decision~\cite{wang2019designing}. Therefore, being small is a key preference on counterfactual explanations~\cite{moraffah2020causal, grace}. 

\mc{
We design a binary variable \emph{Is-Smallest-Explanation (ISE)} in the performance study of the compared methods in producing small counterfactual explanations. For the explanations produced by all methods on the same failed KS test, the ISE of the smallest explanation is $1$, and $0$ for the other explanations.
}

We evaluate MOCHE and the six baseline methods in ISE on the failed KS tests of the time series datasets. GRC and CS cannot find counterfactual explanations for some failed KS tests. To fairly compare the methods, among the 2,690 failed KS tests in those datasets, in this experiment we only consider the 847 ones (31.4\%) where all methods can generate counterfactual explanations. Figure~\ref{fig:concise_sr_ais} shows the average ISE of all explanations.

\begin{figure}[t]
    \centering
    \includegraphics[width=0.75\linewidth]{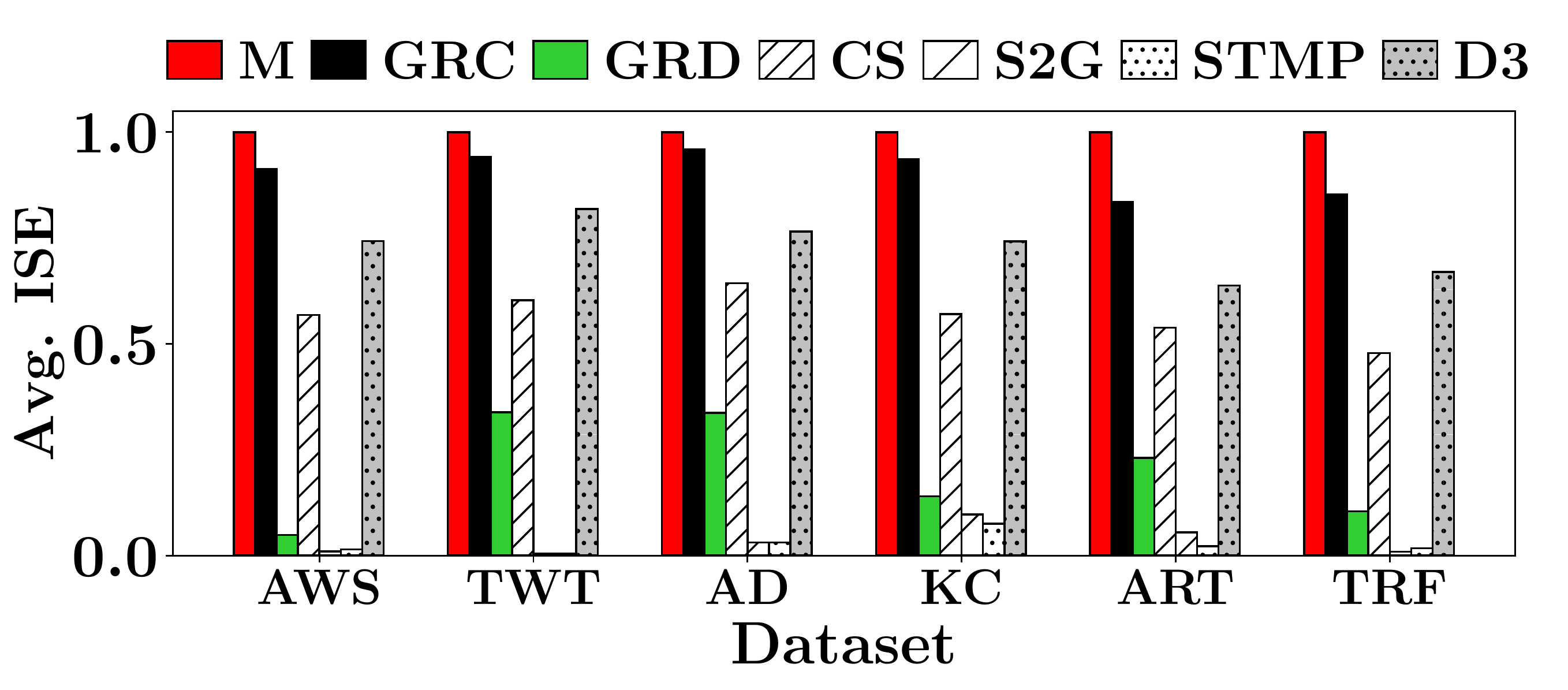}
    \caption{\mc{The average ISE, the larger the better.}}
    \label{fig:concise_sr_ais}
\end{figure}

\mc{STMP and S2G perform poorly. They choose some data points from the outlying subsequences as explanations on a failed KS test. Their outlying scores are computed on normalized subsequences, whose original distributions are changed~\cite{boniol2020series2graph}. Therefore, the data points from the outlying subsequences cannot explain why the KS test detects the distribution change between the reference set and the test set, and thus cannot find the smallest explanations on most of the failed KS tests.}

\mc{D3 outperforms STMP and S2G. 
It interprets by comparing the estimated distributions of the reference set and the test set. 
However, limited by the approximation quality of its distribution estimator, D3 cannot always produce the smallest explanations.}

GRD and CS do not perform well. Both methods generate explanations by taking the first several data points in the preference lists until the picked data points reverse the KS tests. However, since the preference lists are generated by a method independent from the KS test, some data points that are not highly relevant to the failure of the KS test may still be ranked high in the preference lists.
As a result, those two methods may select many data points irrelevant to the failure of the KS test and lead to unnecessarily large explanations.

As a counterfactual explanation method, GRC generates explanations by solving an optimization problem, which allows it to re-rank the data points based on their effects on the KS tests. Therefore, as shown in Figure~\ref{fig:concise_sr_ais}, GRC finds smaller explanations than the other baseline methods. However, GRC still cannot guarantee to find the smallest explanations all the time, because its objective function is non-differentiable and hard to minimize.

MOCHE guarantees to find the smallest explanation and thus has ISE value $1$ in all cases. 


\subsubsection{Contrastivity}
\label{sec:exp_effective}

A counterfactual explanation on a failed KS test should reverse the failed KS test into a passed one. In this subsection, we quantitatively evaluate the performance of the methods in providing explanations that can reverse failed KS tests.

To measure the capability of a method, we use the \emph{reverse factor (RF)}, which is the ratio $   RF=\frac{\text{Number of reversed failed KS tests}}{\text{Total number of failed KS tests}}$.
The larger the RF value, the stronger capability a method reversing failed KS tests.

Since GRC and CS cannot produce all results within 24 hours on some data sets, we constrain the two methods to only generate explanations using the top-$100$ ranked data points in the preference lists $L$. In other words, GRC and CS abort if a failed KS test does not have a counterfactual explanation that is a subset of the top-$100$ data points. To compare the methods in a fair manner, in this experiment, we only count the 1,293 ($48.1\%$) among the 2,690 failed KS tests where GRC and CS do not abort. \mc{Table~\ref{tbl:reverse_factor} shows the RF of CS and GRC. The RF values of the other methods are always 1 on all datasets.}

CS and GRC cannot find counterfactual explanations for a large number of failed KS tests. The non-differential objective function of GRC is hard to optimize.
CS likely samples the top-ranked data points in a preference list~\cite{croce2019sparse}. 
If the top-ranked data points are not relevant to the failure of a KS test, CS cannot reverse the failed KS test within its optimization steps.
One may improve the RF of GRC and CS by more optimization steps. However, as to be shown in Section~\ref{subsec:exp_scale}, these two methods are very slow, and more optimization steps make them even slower.
The other baselines have a good RF.  However, as shown in Figure~\ref{fig:concise_sr_ais}, those methods tend to find large subsets of the test set as explanations, which are not informative~\cite{grace, moraffah2020causal}

\mc{The RF of MOCHE is $1$ on all datasets. MOCHE guarantees to produce the most comprehensible counterfactual explanations.}

\subsection{Effectiveness and Case Study}
\label{subsec:exp_effective}

\mc{A counterfactual explanation on a failed KS test is effective if removing the explanation from the test set could make the distributions of the reference set and the test set similar. In this subsection, we first quantitatively evaluate the effectiveness of the explanations generated by all methods.
Then, we conduct a case study to illustrate the effectiveness of the most comprehensible explanations.}

\begin{figure}[t]
    \centering
    \includegraphics[width=0.75\linewidth]{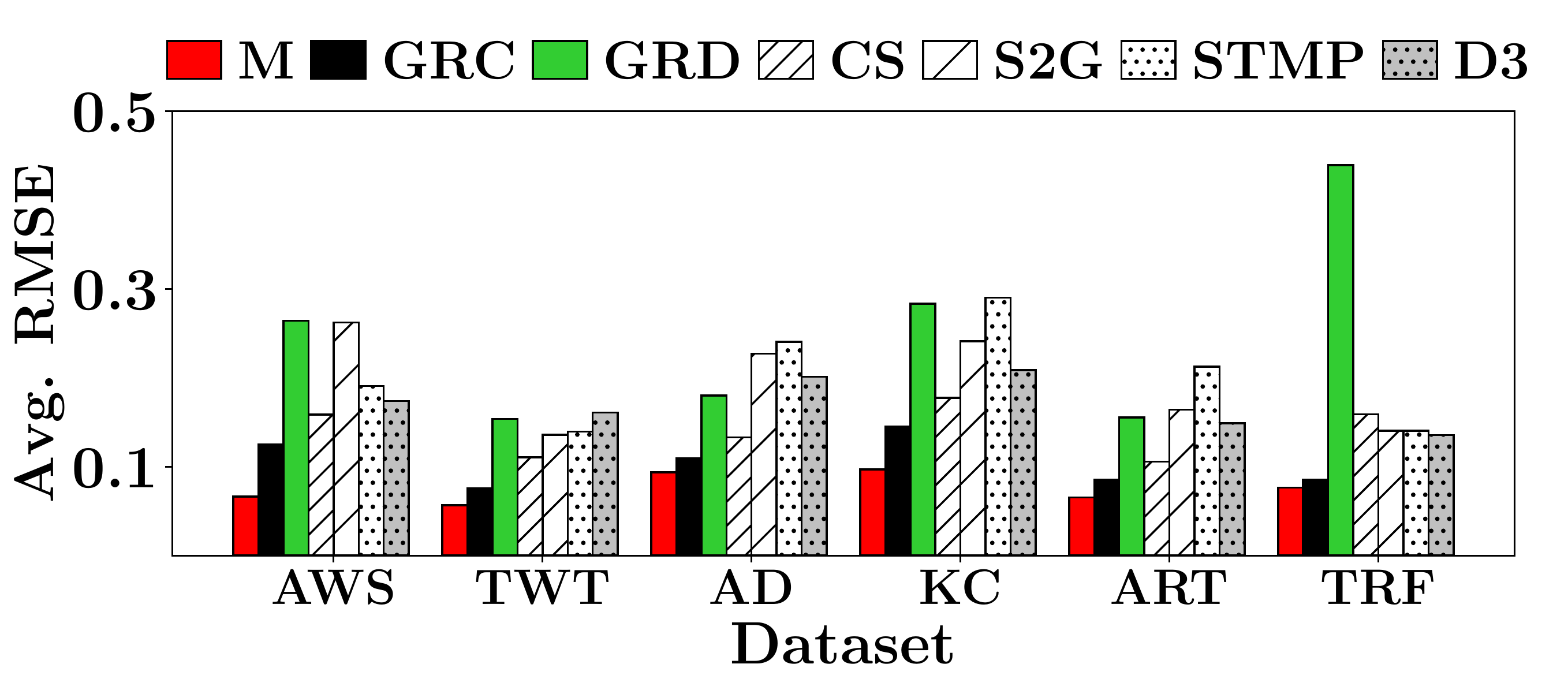}\vspace{-3mm}
    \caption{\mc{The average RMSE, the smaller the better.}}
    \label{fig:ks_statistic}\vspace{-1mm}
\end{figure}
\begin{table}[t]
\begin{tabular}{|l|c|c|c|c|c|c|}
\hline\small
\textbf{Method} & \textbf{AWS}  & \textbf{TWT}  & \textbf{AD}   & \textbf{KC}   & \textbf{ART}  & \textbf{TRF}  \\ \hline
CS     & 0.85 & 0.92 & 0.93 & 0.90 & 0.85 & 0.80 \\ \hline
GRC    & 0.76 & 0.70 & 0.78 & 0.59 & 0.70 & 0.82 \\ \hline
\end{tabular}
\caption{\mc{The reverse factor, the larger the better.}}
\label{tbl:reverse_factor}\vspace{-5mm}
\end{table}

\mc{
We evaluate the effectiveness of an explanation $\mathcal{I}$ using the \emph{root mean square error (RMSE)} between the empirical cumulative functions of $R$ and $T' = T \setminus \mathcal{I}$. The RMSE is defined as
$    \text{RMSE}=\sqrt{\frac{\sum_{x\in R \cup T'}(F_{R}(x) - F_{T'}(x))^2}{|R \cup T'|}}$,
where $F_{R}$ and $F_{T'}$ are the empirical cumulative functions of $R$ and $T'$, respectively. 
A small RMSE value indicates the distributions of $R$ and $T'$ are similar and the explanation $\mathcal{I}$ is good. }

\mc{
We evaluate MOCHE and all baseline methods in RMSE on the failed KS tests of the time series datasets. Figure~\ref{fig:ks_statistic} shows the average RMSE on each data set for each method. }

\mc{
GRC performs best among all baselines, as it generates explanations on failed KS tests by minimizing the largest absolute difference between $F_{R}$ and $F_{T'}$. However, as its non-differential objective function is hard to minimize, it cannot find a good solution to its optimization problem. 
As discussed in Section~\ref{subsec:exp_small}, the explanations generated by the other baselines include many data points that are irrelevant to the failure of the KS tests. Therefore, they tend to have large RMSE.
MOCHE outperforms all baselines. It guarantees to produce the smallest explanations that can reverse the failed KS tests, and thus can guarantee the similarity of the distributions. }

\mc{
Let us examine the explanations on the failed KS test conducted on the COVID-19 dataset.
The two sets are shown as histograms in Figure~\ref{fig:example_covid}a.
Figures~\ref{fig:case_study_aws}a,~\ref{fig:case_study_aws}b, and~\ref{fig:case_study_aws}c
show the histograms of the explanations produced by MOCHE, GRD and D3, respectively. In this case, among all baselines GRD and D3 produce the smallest explanations that can reverse the failed KS test.   The empirical cumulative functions of the reference set, and the test set after removing each explanation are shown in Figure~\ref{fig:case_study_aws}d.}

\begin{figure}[t]
    \centering
    \includegraphics[page=2, width=1\linewidth]{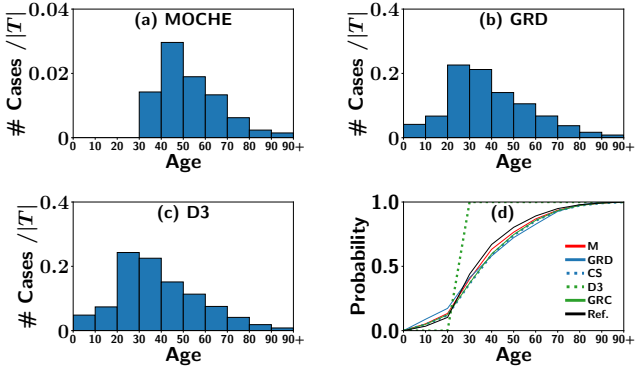}\vspace{-3mm}
    \caption{\mc{The explanations on the failed KS test conducted on the COVID-19 dataset. (d) shows the empirical cumulative functions of the reference set, and the test set after removing the explanations produced by different methods (best viewed in color).}}
    \label{fig:case_study_aws}
\end{figure}

\mc{
Figure~\ref{fig:case_study_aws}a shows that MOCHE selects some data points in the middle and senior age groups. MOCHE mainly selects the data points from age groups that have larger relative frequencies in the test set than in the reference set.
As shown in Figure~\ref{fig:example_covid}b, MOCHE only selects some data points from FHA (Fraser HA), the HA with the largest population. In September, the number of infected middle-aged and senior people in the HA increased dramatically, according to the news reports and analysis in media.
As shown in Figure~\ref{fig:case_study_aws}d, after removing the explanation, the distribution of the test set is most similar to that of the reference set. The results here match the real situation well.
}

\mc{
In terms of explanation size, MOCHE, GRD, and D3 select $291$ $(8.6\%|T|)$, $3,115$ $(92.3\%|T|)$, and $3,370$ $(99.9\%|T|)$ points in their explanations, respectively.
GRD and D3 select almost all data points in the test set. Such explanations are not informative at all.
Please note that STMP and S2G cannot interpret the failed KS test, as they can only work on time series. 
}

\subsection{Efficiency and Scalability}
\label{subsec:exp_scale}

In this subsection, we report the runtime of all methods. In addition, to evaluate the effectiveness of our pruning techniques, we implement a lower-bound ablation MOCHE$^{ns}$ by disabling the pruning using the lower bound of the explanation size (Section~\ref{subsec:binary_search}).

We vary the size of reference sets and test sets. As explained in Section~\ref{sec:settings}, for a given reference/test set size, there are multiple failed KS tests. 
MOCHE constantly outperforms all baseline methods on all datasets. Limited by space, we only report the the average runtime of each method on the largest dataset TWT in Figure~\ref{fig:k_efficiency}a.
The runtime of all methods increases when the test sets become larger. MOCHE is $3$ orders of magnitudes faster than GRC and CS. 

\begin{figure}[t]
    \centering
    \includegraphics[page=1, width=\figwidthone\linewidth]{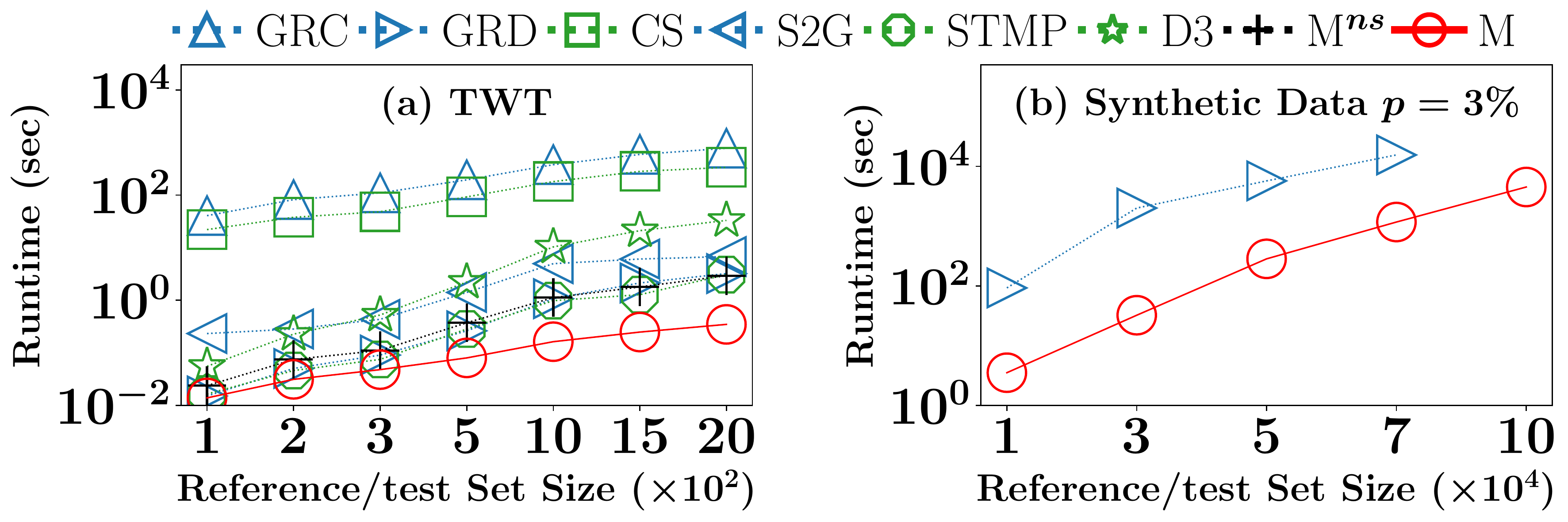}\vspace{-3mm}
    \caption{\mc{The runtime (plotted in logarithmic scale) on data set TWT and the synthetic dataset (best viewed in color).}}
    \label{fig:k_efficiency}
\end{figure}

The poor performance of all baseline methods is due to the cost of conducting huge numbers of KS tests.
GRC needs to conduct $l\cdot m$ KS tests to find an explanation, where $l$ is the number of optimization steps. According to the parameter settings in~\cite{grace}, in the worst case, GRC has to perform $l=10,000$ steps. 
 
CS has to generate a large number of samples to find an explanation, which takes a long time to verify. In the worst case, according to the parameter settings in~\cite{croce2019sparse}, CS has to generate $150,000$ random samples. GRD and D3 need to conduct the KS test after removing each data point.
Since our estimated lower bound on the size $k$ of explanations effectively reduces the search range of $k$, MOCHE interprets failed KS tests faster than MOCHE$^{ns}$.

\begin{figure}[t]
    \centering
    \includegraphics[width=0.65\linewidth, page=1]{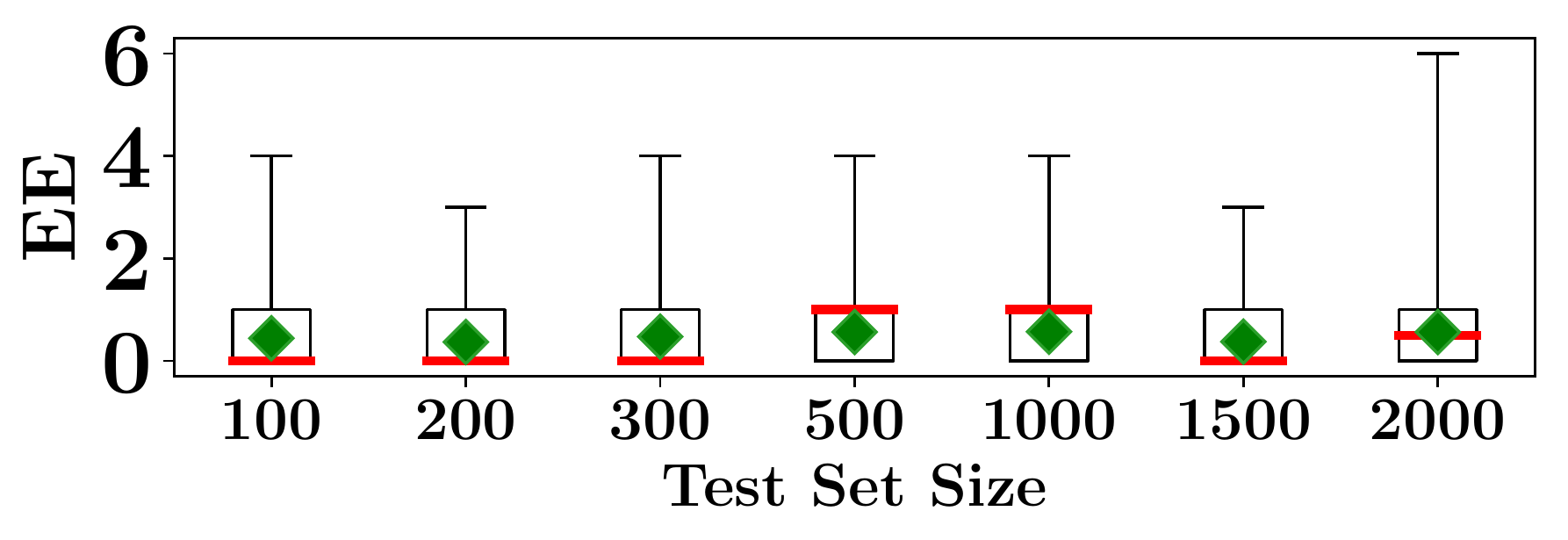}\vspace{-3mm}
    \caption{The estimation errors (EE) of the explanation size.}
\label{fig:error_vs_size}
\end{figure}

\mc{
To comprehensively evaluate the efficiency, we construct large synthetic datasets to further compare the performance of MOCHE and GRD, the most efficient baseline method that can produce comprehensible explanations. Following the idea in~\cite{kifer2004detecting}, we first generate the reference set $R$ and the test set $T$ with the same size $w$ from the normal distribution. Then, we replace a $p$ fraction of $T$ by data points sampled from a uniform distribution between $[-7,  7]$, such that $R$ and $T$ fail the KS test with significance level $\alpha=0.05$. We use a variety of $w$ and $p$ values.
We interpret the failed KS tests with randomly generated preference lists $L$.
Our method constantly outperforms GRD on all these experiments. Limited by space, we only report the  runtime on the synthetic dataset with $p=3\%$ in Figure~\ref{fig:k_efficiency}b.
When $w=10^5$, GRD cannot stop within 2 hours. MOCHE is at least 10 times faster than the most efficient baseline method.
}

To investigate the tightness of the lower bound $\hat{k}$ on the explanation size $k$, we also report the \emph{estimation error (EE)} defined by $k - \hat{k}$.
A small value of EE indicates that our estimated lower bound is tight.
Figure~\ref{fig:error_vs_size} shows the results with respect to different sizes of test sets by box plot~\cite{williamson1989box}. Each bar in the figure shows EE on the KS tests with a specific size of test sets. The upper and lower edges of a box show the first and third quartiles of the estimation errors, respectively. The upper and lower ends of an error bar show the maximum and minimum 
EE, respectively. The red line segment in a box and the green diamond marker show the median and the mean of the estimation errors, respectively. 

For more than $25\%$ of the failed KS tests, our estimated lower bound $\hat{k}$ is equal to the true value of $k$. For more than $75\%$ of the failed KS tests, the estimation errors are up to 1. In the worst case (a KS test with $2,000$ data points in the test set), our estimation error is only $6$, much smaller than the test set size. Besides, we observe that when the test sets become larger, the average value of estimation errors is always smaller than $1$. The results seem to suggest that estimation errors may be treated as a constant in practice. This result is consistent with our observation in Figure~\ref{fig:k_efficiency} that MOCHE is more efficient than MOCHE$^{ns}$.

\section{Conclusions}
\label{sec:conclusion}

In this paper, we tackle the novel problem of producing counterfactual explanations on failed KS tests. We propose the notion of most comprehensible counterfactual explanation, and develop a two-phase algorithm, MOCHE, which guarantees to find the most comprehensible explanation fast. We report extensive experiments demonstrating the superior capability of MOCHE in efficiently interpreting failed KS tests. As future work, we plan to extend MOCHE to interpret failed KS tests conducted on multidimensional data points~\cite{10.1093/mnras/225.1.155, rabanser2019failing}.

\nop{
\begin{acks}
 This work was supported by the [...] Research Fund of [...] (Number [...]). Additional funding was provided by [...] and [...]. We also thank [...] for contributing [...].
\end{acks}
}

\bibliographystyle{ACM-Reference-Format}
\bibliography{sample}

\clearpage
\section*{Revision Summary}

\setcounter{page}{1}

We are grateful to the reviewers and the meta-reviewer for their constructive and insightful comments. In this revision, we updated more than $40\%$ of the content, including new examples, new baselines, new experiments, adjusted claims, and more clarification and explanation.  We believe that all review comments are addressed in this revision, as summarized here.

To be consistent with existing studies~\cite{wachter2017counterfactual, meliou2010complexity, fariha2020causality, brundage2020toward, fong2017interpretable, mothilal2020explaining}, in our revision, we use the term \textit{counterfactual explanation}, instead of counterfactual interpretation.

\subsection*{Comments by Meta-reviewer (MR)}

\textbf{MR-Q1:} \textit{``Strengthen experimental study. Generally, the experimental study is not perceived as convincing. Most notably, baselines are missing and quality improvements are often marginal. Moreover, the proposed approach is evaluated on the anomaly detection problem, but uses input information from another anomaly detection algorithm. This is confusing and at the end it's not clear (and quantified) what the added benefit of the proposed approach is with respect to other baselines. Finally, the effect of the preference list quality should be evaluated in more detail.''}

\smallskip\noindent\textbf{Answer:} 
Following your suggestions, we renovate the experiments dramatically. We include new baselines and new experiments. Specifically, we implement three new baseline methods by properly extending the three related studies suggested by Review 1. To avoid any misunderstanding or confusion that our method is an anomaly detection method, in this revision, we redesign our evaluation metrics so that they can better evaluate the performance of all methods. Please refer to our answer to question R1-Q2 for more details.
We also illustrate the effects of preference lists on explanations using a real COVID-19 data set in Example~\ref{ex:lexicographic_order}. In the example, we present and compare the most comprehensible explanations on the same failed KS test that are produced by MOCHE based on two different preference lists, one on regions and the other on age groups.

\medskip

\textbf{MR-Q2:} \textit{``Strengthen motivation and argue clearly for the proposed setting. Most notably, add a convincing motivating example and argue for the use of another anomaly detection method to create the preference list. Also explain better or tone down why you see causal relationships in the counterfactual interpretation of a failed KS test.''}

\smallskip\noindent\textbf{Answer:} First, we extensively revise the introduction to better demonstrate our motivation. A concrete motivation example using a real COVID-19 data set is added to the section. We also clarify our problem setting.

Second, the preference list construction method is not limited to outlier detection methods. 
In Section~\ref{subsec:exp_effective} and Example~\ref{ex:lexicographic_order}, we study two preference lists using regions and age groups, respectively, on the COVID-19 dataset. The two preference lists are irrelevant to outlier detection at all.


Third, we revise our statements about counterfactual explanation and causality discovery. The counterfactual explanation methods discover relevant and dependent relationships between a set of factors $X$ and a decision $Y$, rather than causal relationships~\cite{wachter2017counterfactual}. Please refer to our answer to question R1-Q6 for more details. 

\medskip

\subsection*{Comments by Reviewer \#1 (R1)}

\textbf{R1-Q1:} \textit{``(W1) The construction of the preference list should be independent from the problem that the KS test is solving''}

\smallskip\noindent\textbf{Answer:} To address this comment, in Example~\ref{ex:lexicographic_order} , we use the KS test to detect differences of distributions of COVID-19 cases in August and September 2020. Using the preference list formed by populations of regions we can obtain explanations about how cases from HAs of large population contribute to the differences. The case study results are presented in Section~\ref{subsec:exp_effective}.  
We also demonstrate that, for users interested in understanding how the distributions of the infected cases are related to age groups, they can use a preference list by sorting all cases in age group descending order.

When a KS test is failed, investigating the outliers in the test set is a common preference of many users~\cite{rabanser2019failing, pinto2019automatic}. 
To capture this preference, for the six datasets in the NAB repository, we apply a time series outlier detection method to generate the preference lists. The outlier detection method and thus the preference lists are independent from the KS test as well.

\medskip

\textbf{R1-Q2:} \textit{``(W2) For the use case employed in the experimental evaluation, there exist baselines that could be used to compare to''}

\smallskip\noindent\textbf{Answer:}
The previous version mainly uses outlier detection as the example scenario.  We realize that the example there maybe misleading, since the KS test and our counterfactual explanations have much broader applications. To avoid the confusion, in this revision, 
we redesign our evaluation metrics so that they can better evaluate the performance of all methods. 
Moreover, we implement three new baseline methods by extending the studies suggested by Reviewer 1, STOMP~\cite{yeh2016matrix}, Series2Graph~\cite{boniol2020series2graph}, and D3~\cite{10.5555/1182635.1164145}. STOMP and  Series2Graph are time series outlier detection methods. The baseline methods, Extended-STOMP, Extended-Series2Graph, and Extended-D3, generate counterfactual explanations $\mathcal{I}$ by greedily selecting the top $l$ outlying data points (or subsequences) so that $R$ and $T \setminus \mathcal{I}$ can pass the KS test. Unfortunately, the three baseline methods still cannot produce comprehensible counterfactual explanations that satisfy users' preferences. Besides, Extended-STOMP and Extended-Series2Graph can only work on time series. They cannot interpret the failed KS test conducted on the more general data sets, such as the COVID-19 dataset.
In Section~\ref{sec:exp}, we comprehensively compare the performance of our method and the baseline methods. The results show that our method constantly outperforms all baselines in all experiments. 

\medskip

\textbf{R1-Q3:} \textit{``(D1) The way the proposed approach and solution are motivated and built, are not very convincing. The authors describe the use of KS for anomaly detection, and use another anomaly detection algorithm to build the preference list L. So it seems that in order to detect and explain anomalies, their method needs another anomaly detection algorithm. I still believe that proposed approach is interesting. Maybe changing the narrative (motivating example, problem scenario, preference list construction) will address this issue.''}

\smallskip\noindent\textbf{Answer:} 
The reviewer's insight is correct.  Our method aims to interpret why a KS test is failed, rather than anomaly detection. 
We revise the introduction and redesign the experiments to better demonstrate our motivation and evaluate the performance of our methods.
For example, in Section~\ref{sec:exp_effective}, we present a case study about interpreting a failed KS test conducted on the COVID-19 dataset. In that experiment, we use the population descending order of the reported HAs to generate the preference list $L$ of data points (COVID-19 cases) in the test set. This preference list captures a user's interest in understanding how cases from regions of large population contribute to the change of distribution of infected cases.
As shown in our case study in Section~\ref{subsec:exp_effective}, our explanation on the failed KS test 
only includes some COVID-19 cases from the HA with the largest population. In September 2020, the number of infected middle-aged and senior people in the region increased dramatically.
Besides, as discussed in 
Example~\ref{ex:lexicographic_order}, to
understand how cases in senior age groups contribute to the change of distribution of infected cases,
we can sort all cases in age group descending order and obtain another preference list. In this list, senior people are paid more attention.

\medskip

\textbf{R1-Q4:} \textit{``(D2) The authors now evaluate their method in an anomaly detection setting in time series data, with the proposed approach being used to essentially identify the parts of the time series that constitute the anomalies. They also claim that there are no baselines to compare to. While this is true in what regards the explanation of the KS test, it is not true for identifying time series subsequences that constitute anomalies. Under the current narrative, the authors could apply a time series anomaly detection technique, rank the anomalies by anomaly score, and propose these results as the explanations.''}

\smallskip\noindent\textbf{Answer:} 
To address this comment, we redesign the experiments largely.
Please refer to our answer to the second question (R1-Q2).

\medskip

\textbf{R1-Q5:} \textit{``(D3) Some key references are missing. The last three of the recommended works (or some of them, or one of the methods in the survey below, or even [48,58] of the paper) could be used as a baseline. If the authors want to show that their approach helps in the anomaly detection problem, then they should demonstrate that the use/incorporation of their approach can produce more/better/easier to understand results that other state-of-the-art anomaly detection methods.''}

\smallskip\noindent\textbf{Answer:} 
We add new baselines according to this suggestion. 
Please refer to our answer to the second question (R1-Q2).

\medskip

\textbf{R1-Q6:} \textit{``(D4) Section 3.2 says that "counterfactual interpretation methods describe the causal relationship". Is this truly causality? Please give a reference.''}

\smallskip\noindent\textbf{Answer:} 
The reviewer asks an insightful question.  We revise the statement. The counterfactual explanation methods discover relevant and dependent relationships between a set of factors $X$ and a decision $Y$, rather than causal relationships~\cite{wachter2017counterfactual}.

Counterfactual explanation is an explanation technique proposed by the community of explainable artificial intelligence to interpret algorithmic decisions~\cite{mothilal2020explaining}. The counterfactual explanation methods interpret a decision $Y$ by finding a smallest set of relevant factors $X$, such that changing $X$ can alter the decision $Y$~\cite{moraffah2020causal}. Different from causality analysis~\cite{pearl2018theoretical}, counterfactual explanation cannot identify causalities. It  discovers the relevancy and dependency between the factors $X$ and the decision $Y$~\cite{wachter2017counterfactual}. Counterfactual explanation is widely adopted as an efficient and effective tool to help people understand and handle the algorithmic decisions made in many real world applications~\cite{wachter2017counterfactual, meliou2010complexity, fariha2020causality, brundage2020toward, fong2017interpretable, mothilal2020explaining}. 

A counterfactual explanation on a failed KS test reveals the data points in $T$ that are likely relevant to the failure of the KS test. The identified data points may not be the underlying abnormalities.


\medskip

\textbf{R1-Q7:} \textit{``(D5) Section 3.4 says that "each data point has a unique rank in L". I don't agree: it may be the case that an outlier method assigns the same outlying score to two different points.''}

\smallskip\noindent\textbf{Answer:} Our formulation of the most comprehensible counterfactual explanation assumes a preference list of data points in the test set. In general, if two or multiple points have the equivalent preferences, they can be sorted arbitrarily in the preference list.

\medskip

\textbf{R1-Q8:} \textit{``(D6) How different are the preference lists produced by Spectral Residual and Bitmap. You should report this.
''}

\smallskip\noindent\textbf{Answer:} In our experiments, we find that the preference lists constructed by Bitmap and Spectral Residual are very similar to each other, and thus lead to similar explanations on the failed KS tests. Limited by space, we do not report the detailed results with the preference lists constructed by Bitmap. 

\nop{
In addition to outliers, we study two new types of user interest in this revision. 
In Section~\ref{subsec:exp_effective}, we use the population descending order of the reported HAs to generate the preference list $L$ of data points in the test set. 
Example~\ref{ex:lexicographic_order} considers using the age descending order of the infected cases to generate the preference list $L$ of data points in the test set. 
}

\medskip

\textbf{R1-Q9:} \textit{``(D7) All figures should be enlarged. They are now too small to read.
''}

\smallskip\noindent\textbf{Answer:} All figures are enlarged.

\subsection*{Comments by Reviewer \#2 (R2)}

\textbf{R2-Q1:} \textit{``I *think* that the paper is not in the scope of "very large databases." The indicators for my suspicion: (1) The novelty of the paper is in defining a problem related to a statistical test. (2) The technical part of the paper consists in large parts in exploiting statistical properties. (3) The data sets used have at most a bit more than 100k data points.''}

\smallskip\noindent\textbf{Answer:} We are pleased to learn from the meta-review ``\emph{Also, there has been some discussion about whether this paper fits topic-wise to PVLDB, which was ultimately answered affirmatively.}''


The well-known Kolmogorov-Smirnov (KS) test~\cite{klotz1967asymptotic} has been widely used in many studies closely related to VLDB. For example, Kifer~\textit{et~al.}~\cite{kifer2004detecting} apply the KS test to detect distribution drifts in data streams. Zheng~\textit{et~al.}~\cite{zheng2018order} apply the KS test to evaluate the performance of their proposed simulation framework of ridesharing order.
Agrawal~\textit{et~al.}~\cite{10.1145/1007568.1007632} propose a data encryption algorithm for database systems, and apply the KS test to verify the quality of the encrypted data. Zhao~\textit{et~al.}~\cite{10.1145/3183713.3183739} propose a sampling algorithm to speed up table join and apply the KS test to verify whether their  framework can generate samples from desired distributions.
Lee~\textit{et~al.}~\cite{lee2002learning} propose to conduct the KS test between a set of legitimate queries and a set of new coming queries to detect intrusion attacks to database systems. Buoncristiano~\textit{et~al.}~\cite{10.1145/2814710.2814714} propose an iterative algorithm to explore very large databases by using the KS test. With the surge in the applications of the KS test, there is
a great need for tools that help users analyze and understand the decisions of the KS test. Our work provides valuable contributions to this direction.

The major algorithmic contribution in this paper is on designing scalable data structures and algorithms that can search for the most comprehensible explanations efficiently, without conducting many KS tests.  Thus, it is a data processing and analysis task instead of just exploiting statistical properties.

To address the concern on the data set size, we conduct experiments on large synthetic datasets. In Section~\ref{subsec:exp_scale},
we interpret failed KS tests with test set sizes ranging from $10^4$ to $10^5$. Figure~\ref{fig:k_efficiency}b shows that our method is at least 10 times faster than the most efficient baseline method.

\subsection*{Comments by Reviewer \#3 (R3)}

\textbf{R3-Q1:} \textit{``(W1) The relationship to counterfactual interpretation is not completely transparent and convincing''}

\smallskip\noindent\textbf{Answer:} 
The concept of counterfactual explanations on failed KS tests is proposed following the principled idea of counterfactual explanation~\cite{moraffah2020causal}.
Please refer to our answer to question R1-Q6 for a more detailed discussion about causality discovery and counterfactual explanations.

\medskip

\textbf{R3-Q2:} \textit{``(W2) A convincing motivating example is missing''}

\smallskip\noindent\textbf{Answer:} A concrete motivating example using real data about COVID-19 is added to Section~\ref{sec:intro}.

\medskip
\textbf{R3-Q3:} \textit{``(W3) Experimental evaluation is not thoroughly convincing''}

\smallskip\noindent\textbf{Answer:} 
We conduct more extensive experiments in Section~\ref{sec:exp} to comprehensively compare MOCHE with six baseline methods. Among the 6 baseline methods, three are newly added to this revision.

We test all methods on 7 real world datasets of different application domains, such as COVID-19 cases, AWS CPU usages, and travel time. For each dataset except for the COVID-19 cases, we conduct KS tests with a variety of test set sizes, ranging from 100 to 200, 300, 1,000, 1,500, and 2,000.  We also construct large synthetic datasets to comprehensively evaluate the efficiency and scalability of our method.

We comprehensively evaluate the performance of all methods in four metrics, which are the size of explanations, the contrastivity of explanations, the effectiveness of explanations, and efficiency.
The experimental results show that our method MOCHE clearly outperforms all baseline methods in all experiments.
Even though some baseline methods have comparable contrastivity as our method (Section~\ref{sec:exp_effective}), as shown in Figure~\ref{fig:concise_sr_ais}, those methods tend to find a substantially larger subsets of the test set as explanations, which are not informative~\cite{grace, moraffah2020causal}.

\medskip
\textbf{R3-Q4:} \textit{``(W4) Presentation shall be improved by providing some more examples''}

\smallskip\noindent\textbf{Answer:} 
We add multiple new examples in our paper to better illustrate our motivation, problem formulation, and the proposed techniques. 
We add a concrete motivating example in Section~\ref{sec:intro}. 
We illustrate the difference between counterfactual explanations with different lexicographical orders 
and preference lists
in Example~\ref{ex:lexicographic_order}.  
We add three running examples, Examples~\ref{ex:bounds}, ~\ref{ex:binary_search}, and~\ref{ex:interpretation_construction}, to clearly demonstrate the essential notations of our algorithm.

\medskip
\textbf{R3-Q5:} \textit{``(D1) While the idea of finding a minimal subset of data to be removed from the test set, taking into consideration some user-defined order, is clear, the relation to counterfactual explanations seems somewhat far-fetched to me. The point is that counterfactual explanations are based on causal relationships (as far as I understand), which I cannot see here. If some data points are removed from the test set to pass the KS test, this does not mean that the causes of abnormal behavior have been eliminated. This says only that those elements have been eliminated. For me the proposed solution would also be convincing without stressing the relationship to counterfactuals and just finding minimal subsets considering some preference order.
''}

\smallskip\noindent\textbf{Answer:} Please refer to our answer to the first question (R3-Q1).

\medskip
\textbf{R3-Q6:} \textit{``(D2) The experimental evaluation is not totally convincing and leaves some open issues. First, there are several experiments where the proposed MONCHI algorithm is not or only slightly better (in terms of quality) than some competitors, e.g., GRC in Fig. 2, GRD in Fig. 3, and GRC in Fig. 6. Second, only the runtime experiments use different lengths of the test sets. I think that it would also be interesting to use different test set sizes for the qualitative experiments.
''}

\smallskip\noindent\textbf{Answer:} 
Following this suggestion, we comprehensively evaluate the performance of all methods in four metrics, which are the size of explanations, the contrastivity of explanations, the effectiveness of explanations, and efficiency.
The experimental results show that our method MOCHE clearly outperforms all baseline methods in all experiments.
Even though some baseline methods have comparable contrastivity as our method (Section~\ref{sec:exp_effective}), as shown in Figure~\ref{fig:concise_sr_ais}, those methods tend to find a substantially larger subsets of the test set as explanations, which are not informative~\cite{grace, moraffah2020causal}.

\medskip
\textbf{R3-Q7:} \textit{``(D3) While the paper is generally very well written and nicely formalized, the technical parts are a bit dry and might be difficult to follow for some readers. I very much appreciate example 1 and 2 to illustrate some basic concepts, and strongly suggest to introduce more examples and if possible illustrations in the technical part of the paper (Sec. 4 and 5). While technically sound, a running example might significantly improve the readability. 
''}

\smallskip\noindent\textbf{Answer:} 
We add three running examples, Examples~\ref{ex:bounds}, ~\ref{ex:binary_search}, and~\ref{ex:interpretation_construction}, to clearly demonstrate the essential notations of our algorithm.

\medskip
\textbf{R3-Q8:} \textit{``(D4) Similarly to point D3, it would be helpful if high-level intuitive descriptions of the theorems could be provided.
''}

\smallskip\noindent\textbf{Answer:} 
The high-level intuitive descriptions of Theorems~\ref{theorem:l_u_bound_fast_check},~\ref{theorem:necessary_condition}, and~\ref{theorem:correctness_algorithm_interpretation_superset_check} are added to the paragraphs right before each theorem.

\medskip
\textbf{R3-Q9:} \textit{``(D5) The last paragraph in Sec. 3.3 briefly explains different preference orders. It would be nice to elaborate a bit more on this, showing perhaps some more concrete examples of preference orders. It might also be considered to move this discussion together with concrete examples to the introduction as a motivation for the work.
''}

\smallskip\noindent\textbf{Answer:} 
In this revision, we change the motivation example to one using COVID-19 real data.  Example~\ref{ex:lexicographic_order} illustrates two possible preference lists and compares the corresponding explanations.

\end{sloppy}
\end{document}